\documentclass[runningheads]{llncs}

\usepackage[a4paper,margin=1in]{geometry} 

\usepackage{graphicx}

\usepackage{cite}
\usepackage{units}

\usepackage{amsmath,amssymb,amsfonts,amsthm}

\usepackage{bm}
\usepackage{graphicx}
\usepackage{textcomp}
\usepackage{xcolor}
\usepackage{enumerate}
\usepackage{braket}
\usepackage{marvosym}
\def\BibTeX{{\rm B\kern-.05em{\sc i\kern-.025em b}\kern-.08em
    T\kern-.1667em\lower.7ex\hbox{E}\kern-.125emX}}
\usepackage[linesnumbered,ruled,vlined,algo2e]{algorithm2e}
\newcommand{\myref}[1]{Eq. \eqref{#1}}
\usepackage{subfigure}

\usepackage{svg}
\theoremstyle{definition}
\newtheorem{assumption}{Assumption}

\SetKwFor{For}{for}{do}{endfor}
\allowdisplaybreaks[4]
\DeclareMathOperator*{\argmin}{arg\,min}
\hyphenation{op-tical net-works semi-conduc-tor}

\begin{document}
\title{Adaptive pruning-based Newton's method for distributed learning\thanks{Partialy supported by National Natural Science Foundation of China under Grant numbers 62302247 and 623B2068, Shandong Province Natural Science Foundation under Grant numbers ZR20220F140, Postdoctoral Fellowship Program of CPSF under Grant numbers:GZC20231460, China Scholarship Council under Grant numbers 202306220153 and China Postdoctoral Science Foundation
under Grant numbers 2024M761806. \textit{Corresponding author: Yuan Yuan.Email address:yyuan@sdu.edu.cn}}}
\author{Shuzhen Chen\inst{1} \and
Yuan Yuan\inst{2} \and
Youming Tao\inst{3,5} \and Tianzhu Wang\inst{3} \and Zhipeng Cai\inst{4} \and Dongxiao Yu\inst{3}
\institute{College of Computer Science and Technology, Ocean University of China, Qingdao 266100, China\\
\and
School of Software \& Joint SDU-NTU Centre for Artificial Intelligence Research (C-FAIR), Shandong University, Jinan 250100, China.
\and
School of Computer Science and Technology, Shandong University, Qingdao 266237, China
\and
Department of Computer Science, Georgia State University, Atlanta, GA 30303, USA.
\and
School of Electrical Engineering and Computer Science, TU Berlin, Berlin 10587, Germany.\\
}}
\maketitle      
\begin{abstract}
Newton's method leverages curvature information to boost performance, and
thus outperforms first-order methods for distributed learning problems.
However, Newton's method is not practical in large-scale and heterogeneous
learning environments, due to obstacles such as high computation and communication
costs of the Hessian matrix, sub-model diversity, staleness of training,
and data heterogeneity. To overcome these obstacles, this paper presents
a novel and efficient algorithm named Distributed Adaptive Newton Learning
(\texttt{DANL}), which solves the drawbacks of Newton's method by using
a simple Hessian initialization and adaptive allocation of training regions.
The algorithm exhibits remarkable convergence properties, which are rigorously
examined under standard assumptions in stochastic optimization. The theoretical
analysis proves that \texttt{DANL} attains a linear convergence rate while
efficiently adapting to available resources and keeping high efficiency.
Furthermore, \texttt{DANL} shows notable independence from the condition
number of the problem and removes the necessity for complex parameter tuning.
Experiments demonstrate that \texttt{DANL} achieves linear
convergence with efficient communication and strong performance across
different datasets.

\keywords{Distributed learning \and Second-order method \and Stochastic optimizationn.}
\end{abstract}

\section{Introduction}\label{sec1}
Distributed machine learning (DML) is a paradigm that leverages the parallelism
and scalability of modern computing architectures to process and analyze
massive and complex data. DML encompasses a variety of techniques that
enable the collaborative training and inference of models across multiple
computing nodes (a.k.a. workers). One of the fundamental
research problems in DML is distributed stochastic optimization (DSO),
which aims to find the optimal model parameters by minimizing a loss function
over stochastic data. DSO is essential for various DML applications, such
as federated learning, distributed deep learning, and distributed reinforcement
learning. These applications have practical implications, such as facilitating
privacy-preserving and cooperative learning among multiple parties, enhancing
the performance and robustness of deep neural networks, and solving complex
and dynamic decision-making problems.

DSO problems can be tackled by two classes of methods: first-order and
second-order. First-order methods, such as stochastic gradient descent
(SGD), rely only on the gradient of the loss function, which indicates
the direction of the steepest descent. They are simple and scalable, but
they suffer from some limitations. They have slow convergence rates and
depend heavily on the step size and the condition number of the loss function~\cite{21DBLP:conf/aistats/ChenYGG22,8DBLP:conf/icml/IslamovQR21,7DBLP:conf/cdc/0001YB20,28DBLP:books/daglib/0034927}.
They also require frequent communication of gradients among the computing
workers, which incurs high communication overhead. Second-order
methods, such as Newton's method, utilize the Hessian of the loss function,
which is the matrix of the second derivatives. They have several advantages
over first-order methods. They use extra curvature information to achieve
faster and more precise convergence. They can also handle more complex
aspects of the loss function's geometry by using second-order derivatives,
which enables them to optimize effectively in complicated landscapes. Moreover,
second-order methods can reduce the communication rounds needed in DSO
scenarios since they usually converge in fewer iterations than first-order
methods \cite{agarwal2017second}.

Newton's method is a second-order method that outperforms first-order methods,
but it is challenging to apply it to DSO, especially in realistic large-scale
and heterogeneous learning environments. The main challenge is the resource
constraints of computation, storage, and communication. Computationally,
the Hessian matrix of the loss function is expensive to invert, store,
and transmit for large-scale problems \cite{byrd2016stochastic}. The problem
is more severe for high-dimensional problems, which need a lot of storage
and manipulation for the huge and dense Hessian matrix. Moreover, the communication
overhead of sending the matrix among distributed workers and servers is
a big bottleneck in DML. Resource constraints also affect the learning
process, as the computing workers have different capabilities
and limitations, such as processing speeds, memory sizes, and battery lives.
In realistic scenarios, computing workers often only access
or compute on a subset of model parameters, creating sub-models. The heterogeneity
and discrepancy among workers and data sources cause noise,
bias, and variance, which influence the convergence and accuracy of DML
\cite{DBLP:journals/ftml/KairouzMABBBBCC21}. Considering the challenges
above, we investigate this question:
\textit{can Newton's method effectively solve distributed stochastic optimization
in heterogeneous environments?} However, applying Newton's method to distributed
stochastic optimization faces several challenges.

\noindent
(1).
\textit{Hessian computation and communication:} The high
cost of computing and transmitting the Hessian matrix across distributed
nodes makes it impractical in large-scale settings.

\noindent
(2).
\textit{Sub-model diversity:} Sub-model diversity partitions
a model into independently trained regions, allowing workers to allocate
resources adaptively. While studied in first-order optimization~\cite{1DBLP:journals/corr/abs-2201-11803,13DBLP:journals/pvldb/YuanWDTKJ22},
research on second-order methods is limited.

\noindent
(3).
\textit{Training staleness:} Some model regions may remain
outdated due to asynchronous updates, slowing overall convergence.

\noindent
(4).
\textit{Data heterogeneity:} Variations in local data distributions
across workers introduce inconsistencies in gradient and Hessian computations,
affecting model accuracy.

\noindent
\textbf{Our Contributions.} We propose a novel algorithm, Distributed Adaptive
Newton Learning (\texttt{DANL}), that tackles the challenges of distributed
learning with heterogeneous data and workers. \texttt{DANL} uses a simple
Hessian initialization and adapts the training regions to the workers'
preferences. In each optimization round, the server broadcasts the global
model to all workers, who select their own masks and train diverse sub-models.
The server then collects and aggregates the local updates from the workers
for each region.

\texttt{DANL} avoids the costly computation and communication of the Hessian
matrix by initializing it once and reusing it throughout the optimization.
\texttt{DANL} also handles the issues of sub-model diversity, training
staleness, and data heterogeneity by introducing a server aggregation mechanism
that leverages the latest updates from different regions of the global
model in each worker and uses them as approximations for the current region
update. This approach effectively addresses the challenges and reduces
the server memory usage.
\texttt{DANL} can be widely applied in real-world scenarios
such as edge computing, federated learning, and autonomous systems, where
distributed optimization is essential. Its ability to handle large-scale,
heterogeneous environments makes it highly suitable for practical, resource-constrained
applications.

In summary, we present \texttt{DANL}, a new distributed stochastic optimization
algorithm based on Newton's method. We provide a rigorous convergence analysis
of \texttt{DANL} under standard stochastic optimization assumptions. The
analysis shows that \texttt{DANL} has a linear convergence rate and is
less sensitive to the condition number and parameter tuning than first-order
methods.
Experimental results verify that \texttt{DANL} delivers
rapid convergence in complex distributed environments.

\section{Related Work}
\label{sec2}

\textit{First-order methods.} Several first-order methods have been suggested
to solve distributed stochastic optimization problems, including distributed
SGD~\cite{25Pu_Olshevsky_Paschalidis_2021}, variance reduction SGD~\cite{26DBLP:conf/nips/ReddiHSPS15},
and accelerated SGD~\cite{27DBLP:conf/allerton/ShamirS14}. Pu et
al. propose DSGD, achieving optimal network-independent convergence rates
compared to centralized SGD~\cite{25Pu_Olshevsky_Paschalidis_2021}. Reddi
et al. investigate variance reduction algorithms in asynchronous
parallel and distributed settings~\cite{26DBLP:conf/nips/ReddiHSPS15}.
Shamir et al. comprehensively study distributed stochastic optimization,
showing the effectiveness of accelerated mini-batched SGD~\cite{27DBLP:conf/allerton/ShamirS14}.
While these methods reduce computation costs, they may increase communication
costs due to more communication rounds. Moreover, they have drawbacks regarding
the use of first-order information, possibly neglecting curvature details
in the objective function. Parameter tuning and dependence on the data's
condition number can be demanding.

\noindent
\textit{Second-order methods.} Newton's method, using the Hessian information,
has shown robustness in optimization problems, including federated learning
with Newton-type methods~\cite{23DBLP:conf/aistats/QianISR22,2DBLP:conf/icml/SafaryanIQR22},
communication-efficient Newton-type methods~\cite{20DBLP:conf/nips/BullinsPSSW21,18DBLP:journals/csysl/QiuZOL23},
and quasi-Newton methods~\cite{3DBLP:journals/tnn/ZhangHDH22,9DBLP:journals/tsp/ZhangLSL23}.
Qian et al. propose Basis Learn (BL) for federated learning, lowering
communication costs through a change of basis in the matrix space~\cite{23DBLP:conf/aistats/QianISR22}.
Safaryan et al. introduce Federated Newton Learn (FedNL) methods,
exploiting approximate Hessian information for better solutions~\cite{2DBLP:conf/icml/SafaryanIQR22}.
Bullins et al. present FEDSN, a communication-efficient stochastic
Newton algorithm~\cite{20DBLP:conf/nips/BullinsPSSW21}. Qiu et al.
develop St-SoPro, a stochastic second-order proximal method with decentralized
second-order approximation~\cite{18DBLP:journals/csysl/QiuZOL23}. Zhang
et al. devise SpiderSQN, a faster quasi-Newton method that integrates
variance reduction techniques~\cite{3DBLP:journals/tnn/ZhangHDH22}, and
propose stochastic quasi-Newton methods with linear convergence guarantees~\cite{9DBLP:journals/tsp/ZhangLSL23}.

However, there is scarce work addressing heterogeneous methods for second-order
optimization, especially considering resource constraints and sub-model
diversity. In this study, we explore distributed learning problems regarding
resource adaptation in stochastic Newton's method optimization.

\section{Model}
\label{sec3}

\subsection{Distributed learning with Newton's method}
\label{sec3.1}

We consider a distributed learning setup where a central server coordinates
with $N$ workers to solve a stochastic optimization problem. The problem
is to find an optimal model parameter vector $\omega ^{*}$ in the
$d$-dimensional parameter space $\mathbb{R}^{d}$ that minimizes the global
population risk $\hat{f}$, which is the expected average loss over all
workers and data samples. Formally, we have
%
\begin{equation}
\label{eq-goal}
\omega ^{*}=\argmin _{\omega \in \mathbb{R}^{d}}\left \{\hat{f}(
\omega ) \triangleq \frac{1}{N}\sum _{i=1}^{N} \mathbb{E}_{\zeta _{i}
\sim \mathcal{D}_{i}}[F_{i}(\omega ;\zeta _{i})]\right \},
\end{equation}
where $\mathcal{D}_{i}$ denotes the local data distribution of worker
$i$, $\zeta _{i}$ denotes a random data sample drawn from
$\mathcal{D}_{i}$, and $F_{i}(\omega ;\zeta _{i})$ denotes the per-sample
loss of worker $i$ caused by the model parameter $\omega $. For simplicity, we define $f_i(\omega)\triangleq\mathbb{E}_{\zeta _{i}
\sim \mathcal{D}_{i}}[F_{i}(\omega ;\zeta _{i})]$ so that $\hat{f}(\omega)=\frac{1}{N}\sum_{i=1}^N f_i(\omega)$.

The learning process is divided into synchronized communication rounds
of $t=1, 2, \cdots $, which produce a sequence of model parameter iterates
$\omega ^{1}, \omega ^{2}, \cdots , \omega ^{T}$. In each
round $t$, each worker $i\in[N]$ draws a random data sample
$\zeta _{i}^{t}\sim\mathcal{D}_{i}$. We define
%
\begin{equation}
\label{eq-eauploss}
\hat F(\omega ^{t}; \zeta _{1}^{t}, \zeta _{2}^{t}, \cdots , \zeta _{N}^{t})
\triangleq \frac{1}{N}\sum _{i=1}^{N} F_{i}(\omega ^{t}; \zeta _{i}^{t}),
\end{equation}
which represents the instant global loss incurred by the data samples generated
in round $t$ under model $\omega ^{t}$. The gradient
$\nabla \hat F^{t}$ and the Hessian $\mathbf{\Pi}^{t}$ of the loss function
at the current model parameter $\omega ^{t}$ are defined as

\[
\begin{aligned}
\nabla \hat F^{t} &\triangleq \nabla _{\omega}F(\omega ^{t}; \zeta _{1}^{t},
\zeta _{2}^{t}, \cdots , \zeta _{N}^{t}) \quad
\text{and} \quad
\mathbf{\Pi}^{t} &\triangleq \nabla ^{2}_{\omega} F(\omega ^{t}; \zeta _{1}^{t},
\zeta _{2}^{t}, \cdots , \zeta _{N}^{t}).
\end{aligned}
\]

We apply Newton's method to solve the problem
stated in \myref{eq-goal} by adapting the stochastic gradient descent
(SGD) framework. The update rule is given by
$ \omega ^{t+1} = \omega ^{t} - [\mathbf{\Pi}^{t}]^{-1} \nabla \hat F^{t}$,
where $[\mathbf{\Pi}^{t}]^{-1}$ denotes the inverse of
$\mathbf{\Pi}^{t}$.

\subsection{Resource-adaptive learning via pruning}
\label{sec3.2}

Online model pruning is a technique that reduces the model size and improves
the efficiency of distributed learning, as proposed in~\cite{1DBLP:journals/corr/abs-2201-11803}.
For $t$-th iteration, given the local loss function $F_{i}(\cdot;\cdot)$ at each worker $i$ and a general pruning policy
$\mathcal{M}$ that generates time-varying and client-specific pruning
masks $\tau _{i}^{t}\in \{0, 1\}^{d}$ for each worker $i$, the pruned local model at worker $i$  is defined as 
$\omega _{i}^{t}\triangleq \omega ^{t}\odot \tau _{i}^{t}$, where
$\odot $ is element-wise multiplication. 

To enable learning under pruned sub-models, each global model $\omega^t$ is partitioned into the same $Q$ disjoint regions, collectively denoted by the set $\mathcal{A}$. Each region contains a distinct number of parameters, allowing for flexible pruning strategies. Workers apply adaptive online masks to prune the global model into heterogeneous sub-models, with each sub-model consisting of various regions tailored to their local resources. Workers train only on the parameters contained within their respective pruned sub-models. Let $\mathcal{A}^t$ denote the set of regions selected for training in round $t$. For each region $q \in \mathcal{A}^t$, let $\mathcal{C}^{t,q}$ represent the set of workers assigned to train on region $q$ during round $t$. We define the minimum worker coverage number for the whole learning process as $\psi^* \triangleq \min_{t,q \in \mathcal{A}^t} |\mathcal{C}^{t,q}|$.

To quantify the delay between consecutive training rounds for each region, we define $\gamma_i^{t,q}$ as the maximum number of rounds during which region $q$ is not trained by worker $i$ up to round $t$. Additionally, we define $\gamma^t \triangleq \max_{q \in \mathcal{A}, i \in [N]} \gamma_i^{t,q}$ as the maximum delay across all regions and workers up to round $t$.

To evaluate the loss incurred by the heterogeneous pruned sub-models over random samples, we extend the notation from \myref{eq-eauploss} as follows:

\begin{equation*}
\hat{F}(\omega _{1}^{t}, \omega _{2}^{t}, \cdots , \omega _{N}^{t}; \zeta _{1}^{t},
\zeta _{2}^{t}, \cdots , \zeta _{N}^{t})\triangleq \frac{1}{N}\sum _{i=1}^{N}
F_{i}(\omega _{i}^{t}; \zeta _{i}^{t}).
\end{equation*}

For simplicity, when the context is clear, we use $F(\omega)$ to represent $\hat{F}(\omega_1^t, \omega_2^t, \dots, \omega_N^t; \zeta_1^t, \zeta_2^t, \dots, \zeta_N^t)$.

\subsection{Other notations}
\label{sec3.3}

We use $\Vert \cdot \Vert $ to denote the $\ell _{2}$ norm for vectors
or the spectral norm for matrices, and $\Vert \cdot \Vert _{F}$ to denote
the Frobenius norm for matrices.

\section{Algorithm}
\label{sec4}

We present our algorithm, \textit{Distributed Adaptive Newton Learning} (\texttt{DANL}), which utilizes the Hessian matrix $\mathbf{\Pi}$ of the initial model as the sole source of second-order information throughout the entire learning process. \texttt{DANL} operates in two distinct phases:

\noindent
\textbf{Phase I: Initialization.} Each worker $i$ computes the local Hessian $\nabla^2 F_i(\omega^0, \zeta_i^0)$ using the initial global parameter $\omega^0$ and sends it to the server. The server aggregates the local Hessians as
$\mathbf{\Pi} = \frac{1}{N}\sum _{i=1}^{N}\nabla ^{2} F_{i}(\omega ^{0},
\zeta _{i}^{0})$. 
This matrix is then projected to obtain $[\mathbf{\Pi}]_{\mu}$, which is used to update the global parameter at each step. The projection is defined as follows:

\begin{definition}[Projection~\cite{2DBLP:conf/icml/SafaryanIQR22}]%
\label{definiton:projection}
The projection of a symmetric matrix $\mathbf{A}$ onto the cone of positive semi-definite matrices
$\{\mathbf{H} \in \mathbb{R}^{d \times d}: \mathbf{H}^{\top }=
\mathbf{H}, \mu \mathbf{I} \preceq \mathbf{H} \}$ is given by
\begin{equation*}
[\mathbf{A}]_{\mu}:=[\mathbf{A}-\mu \mathbf{I}]_{0}+\mu \mathbf{I},
\end{equation*}
where, for any matrix $\mathbf{M}$, $[
\mathbf{M}]_{0}:=\sum \limits _{i=1}^{d} \max \left \{\lambda _{i}, 0
\right \} u_{i} u_{i}^{\top}$ and $\sum_{i} \lambda_i u_i u_i^\top$ represents the eigenvalue decomposition of $\mathbf{M}$..
\end{definition}

\noindent
\textbf{Phase II: Resource-Adaptive Learning.} For each round $t \in {1, 2, \dots, T}$, all workers receive the latest global parameter $\omega^t$ from the server. Each worker then generates a local adaptive mask $\tau_i^t$, which is used to compute a \textit{pruned} local gradient:
$\nabla F_{i}^{t} \triangleq \nabla F_{i}(\omega ^{t}\odot \tau _{i}^{t},
\zeta _{i}^{t})\odot \tau _{i}^{t}$. The server aggregates the pruned local gradients, denoted as $\nabla F^{t}$. Since the global model is divided into $Q$ disjoint regions (as defined by $\mathcal{A}$), the gradient $\nabla F^{t}$ is correspondingly split into $Q$ fragments. Each fragment, denoted as $\nabla F^{t,q}$, represents the gradient for region $q$. As some regions may not be trained by any worker during round $t$, the server maintains $\Theta_i^{t,q}$, the most recent local gradient fragment for region $q$ uploaded by worker $i$. The global gradient fragment $\nabla F^{t,q}$ is then obtained by aggregating all $\Theta_i^{t,q}$ values for $i \in [N]$, and $\nabla F^t$ is reconstructed by combining all $\nabla F^{t,q}$ values for $q \in [Q]$. Finally, $\nabla F^t$ is used to update $\omega^t$ via a preconditioned gradient descent step.

The detailed steps of \texttt{DANL} are provided in Algorithm~\ref{algo1}.

%
\begin{algorithm2e}[!htbp]
    \label{algo1}
    \caption{\texttt{DANL}: Distributed Adaptive Newton Learning.}
    \LinesNumbered
    \KwIn{
        Local datasets $\{\mathcal{D}_{i}\}_{i=1}^{N}$, pruning policy $\mathcal{M}$, initialized model $\omega ^{0}$.
    }
    \KwOut{
    $\omega ^{t}$.
    }
    \tcc{Phase I: Initialization}
    \textcolor{blue!80}{\textbf{Server}} broadcasts $\omega ^{0}$ to all workers\;
    \For{\textbf{\textup{every}} \textcolor{orange}{\textbf{worker} $\mathbf{i}$} \textbf{\textup{in parallel}}}{
        Compute local gradient $\nabla F_{i}(\omega ^{0}, \zeta _{i}^{0})$ and local Hessian $\nabla ^{2} F_{i}(\omega ^{0}, \zeta _{i}^{0})$ \;
        Send $\nabla F_{i}(\omega ^{0}, \zeta _{i}^{0})$ and $\nabla ^{2} F_{i}(\omega ^{0}, \zeta _{i}^{0})$ to the server \;
    }
    \textcolor{blue!80}{\textbf{Server}} aggregates the local Hessian matrices: $\mathbf{\Pi} = \frac{1}{N}\sum \limits _{i=1}^{N}\nabla ^{2} F_{i}(\omega ^{0}, \zeta _{i}^{0})$ \;
    \textcolor{blue!80}{\textbf{Server}} initializes $\Theta _{i}^{0,q}$ for each worker $i$ and region $q$: $\Theta _{i}^{0,q}= \nabla F_{i}^{q}(\omega ^{0}, \zeta _{i}^{0}) $ \;
    \textcolor{blue!80}{\textbf{Server}} updates the global model: $\omega ^{1} = \omega ^{0} - [\mathbf{\Pi}]_{\mu}^{-1} \left (\nicefrac{\sum \limits _{i=1}^{N}\nabla F_{i}(\omega ^{0}, \zeta _{i}^{0})}{N}\right )$ \;
    \textcolor{blue!80}{\textbf{Server}} broadcasts $\omega ^{1}$ to all workers \;
    \tcc{Phase II: Resource-Adaptive Learning}
    \For{\textbf{\textup{round}} $t=1, \cdots , T$}{
        \For{\textbf{\textup{every}} \textcolor{orange}{\textbf{worker} $\mathbf{i}$} \textbf{\textup{in parallel}}}{
            Generate the mask $\tau _{i}^{t} = \mathcal{M}(\omega ^{t},i)$ \;
            Prune the model: $\omega _{i}^{t} = \omega ^{t} \odot \tau _{i}^{t}$ \;
            Compute the gradient after pruning: $\nabla F_{i}^{t}= \nabla F_{i}(\omega _{i}^{t}, \zeta _{i}^{t})\odot \tau _{i}^{t}$ \;
            Send $\nabla F_{i}^{t}$ to the server \;
        }
        \For{\textbf{\textup{every region}} $q=1, \cdots , Q$}{
             \textcolor{blue!80}{\textbf{Server}} finds $\mathcal{C}^{t,q}=\{i:\tau _{i}^{t,q}=\mathbf{1}\}$\;
                  \For{$i=1$ to $N$}{
                  $\Theta _{i}^{t,q}=\left \{
                    \begin{aligned}
                        \nabla F_{i}^{t,q}& \quad if \quad i \in \mathcal{C}^{t,q} \\
                        \Theta _{i}^{t-1,q}& \quad if \quad i \notin  \mathcal{C}^{t,q}
\end{aligned}
                    \right .$ \;
                    }
            {\textcolor{blue!80}{\textbf{Server}} updates $\nabla F^{t,q}=\frac{1}{N}\sum \limits _{i=1}^{N}\Theta _{i}^{t,q}$ \;}
        }
        \textcolor{blue!80}{\textbf{Server}} updates the global parameter: $\omega ^{t+1} = \omega ^{t} - [\mathbf{\Pi}]_{\mu}^{-1} \nabla F^{t}$ \;
        \textcolor{blue!80}{\textbf{Server}} broadcasts $\omega ^{t+1}$ to all workers \;
    }
\end{algorithm2e}
\section{Convergence results}
\label{Theoretical_results_VTEX1}%

In this section, we analyze the performance of Algorithm~\ref{algo1}. Algorithm~\ref{algo1}
shows the numerical linear convergence rate. We first introduce some key
concepts in optimization.

\begin{definition}[Lipschitz]%
\label{Lipschitz}
A function $\ell :\mathbb{R}^{d}\to \mathbb{R}^{d}$ is $\iota $-Lipschitz
if for $\forall \omega _{1}, \omega _{2}\in \mathbb{R}^{d}$,
\begin{equation*}
\lVert \ell (\omega _{1})-\ell (\omega _{2})\lVert \le \iota \lVert
\omega _{1}-\omega _{2}\rVert .
\end{equation*}
Then, the function $\ell $ has $\iota _{g}$ Lipschitz continuous gradient
and $\iota _{h}$ Lipschitz continuous Hessian if for
$\forall \omega _{1}, \omega _{2}\in \mathbb{R}^{d}$,
\[
\begin{aligned}
\lVert \nabla \ell (\omega _{1}) - \nabla \ell (\omega _{2}) \rVert 
&\le \iota _{g} \lVert \omega _{1} - \omega _{2} \rVert, \quad 
\lVert \nabla ^{2} \ell (\omega _{1}) - \nabla ^{2} \ell (\omega _{2}) \rVert 
\le \iota _{h} \lVert \omega _{1} - \omega _{2} \rVert.
\end{aligned}
\]

\end{definition}

\begin{definition}[Bounded variance]
\label{defn3}
Define a function $L(\omega ,\zeta )$ and its unbiased expectation function
$\ell (\omega )$. $\ell (\omega )$ is an unbiased estimator of
$L(\omega ,\zeta )$ with a bounded variance if for
$\forall \omega \in \mathbb{R}^{d}$,
\begin{equation*}
\mathbb{E}_{\zeta \sim \mathcal{D}}\Vert L(\omega ,\zeta ) - \ell (
\omega )\Vert ^{2} \leq \kappa ^{2}.
\end{equation*}
\end{definition}

\begin{definition}[Strong convexity]
\label{defn4}
A differentiable function $\ell :\mathbb{R}^{d}\to \mathbb{R}^{d}$ is
$\mu $-strongly convex if for
$\forall \omega _{1}, \omega _{2}\in \mathbb{R}^{d}$,
\begin{equation*}
\ell (\omega _{1}) \geq \ell (\omega _{2}) + \left \langle \nabla
\ell (\omega _{2}), \omega _{1}-\omega _{2} \right \rangle +
\frac{\mu}{2}\Vert \omega _{1} - \omega _{2} \Vert .
\end{equation*}
\end{definition}
For twice differentiable functions, strong convexity is equivalent to the
requirement that the minimum eigenvalue of Hessian is above zero, i.e.,
$\lambda _{\min}\left (\nabla ^{2}\ell (\omega )\right )$
$\geq \mu $.

The following necessary assumptions are made for the analysis.

\begin{assumption}%
\label{A-strongly_convex_VTEX1}
The global population risk function $\hat{f}(\omega )$ is $\mu $-strongly convex.
\end{assumption}

\begin{assumption}%
\label{A-Lipschitz}
For $\forall i\in [N]$, $f_{i}(\omega )$ is $\iota $-Lipschitz
for any $\omega \in \mathbb{R}^{d}$ and twice continuously differentiable
in respect of $\omega \in \mathbb{R}^{d}$. Each function
$f_{i}(\omega )$ has a $\iota _{g}$-Lipschitz gradient and a
$\iota _{h}$-Lipschitz Hessian for any $\omega \in \mathbb{R}^{d}$.
\end{assumption}

\begin{assumption}%
\label{A-variance}
There exists a constant $\Lambda \geq 0$ and a constant
$\kappa \geq 0$ such that: (\textrm{i}) for $\forall i\in [N]$ and $\forall \omega \in \mathcal{R}^{d}$, it holds that
$\mathbb{E}_{\zeta \sim \mathcal{D}_{i}}\Vert \nabla F_{i}(\omega ,
\zeta ) - \nabla f_{i}(\omega )\Vert ^{2} \leq \Lambda ^{2}$; (\textrm{ii})
for $\forall i\in [N]$ and the initial $\omega ^{0} \in \mathcal{R}^{d}$,
$\mathbb{E}_{\zeta \sim \mathcal{D}_{i}}\Vert \nabla ^{2} F_{i}(
\omega ^{0},\zeta ) - \nabla ^{2} f_{i}(\omega ^{0})\Vert _{F}^{2}
\leq \kappa ^{2}$.
\end{assumption}

\begin{assumption}%
\label{A-prun}
For $\forall i\in [N]$ and $\forall t\in [T]$, there exists a constant
$\delta \geq 0$ such that
$\mathbb{E}\Vert \omega ^{t}-\omega _{i}^{t} \Vert ^{2} \leq \delta ^{2}$.
\end{assumption}

The assumptions above are fairly standard and have been widely adopted in prior works, e.g.,~\cite{1DBLP:journals/corr/abs-2201-11803,2DBLP:conf/icml/SafaryanIQR22,3DBLP:journals/tnn/ZhangHDH22}. Next, we introduce the following auxiliary notations to facilitate the analysis:

\begin{equation}
	\nabla f_{i}^{t}\triangleq \mathbb{E}_{\zeta_i^t\sim\mathcal{D}_i}[\nabla F_{i}^{t}]= \mathbb{E}_{\zeta_i^t\sim\mathcal{D}_i}[\nabla F_{i}(\omega _{i}^{t}, \zeta _{i}^{t})\odot \tau _{i}^{t}].
\end{equation}

\begin{equation}
	\forall t\ge0, \quad \theta _{i}^{t,q}\triangleq\left \{
                    \begin{aligned}
                        \nabla f_{i}^{t,q}& \quad \text{if} \quad i \in \mathcal{C}^{t,q}, \\
                        \theta _{i}^{t-1,q}& \quad \text{if} \quad i \notin  \mathcal{C}^{t,q} .
\end{aligned}
                    \right .
\end{equation}

\begin{equation}
	\nabla f^{t,q}\triangleq\frac{1}{N}\sum \limits _{i=1}^{N}\theta _{i}^{t,q}.
\end{equation}

We establish the linear convergence of \texttt{DANL} in Theorem~\ref{theorem-rate}. To prove Theorem~\ref{theorem-rate}, we introduce four preliminary results, presented as Lemmas~\ref{le-projection} through~\ref{le-pruning_error_VTEX1}. The detailed proofs for all results are provided in Appendix~\ref{Theoretical_Analysis_VTEX1}.

\begin{lemma}%
\label{le-projection}
For the projected Hessian
$[\mathbf{\Pi}]_{\mu }$ computed according to Definition~\ref{definiton:projection}, we have 
\begin{equation*}
\left \|\mathbf{\Pi}_{\mu}-\mathbf{\Pi}^{*}\right \|_{\mathrm{F}}
\leq \left \|\mathbf{\Pi}-\mathbf{\Pi}^{*}\right \|_{\mathrm{F}},\quad\text{where}\quad\mathbf{\Pi}^{*} \triangleq \nabla ^{2} \hat{f}\left (\omega ^{*}\right ).
\end{equation*}
\end{lemma}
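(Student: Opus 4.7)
The plan is to recognize that the operator $[\cdot]_{\mu}$ defined in Definition~\ref{definiton:projection} is precisely the Frobenius-norm orthogonal projection onto the closed convex set $\mathcal{S}_{\mu} \triangleq \{\mathbf{H} \in \mathbb{R}^{d\times d} : \mathbf{H}^{\top} = \mathbf{H},\ \mathbf{H} \succeq \mu \mathbf{I}\}$, and that the target matrix $\mathbf{\Pi}^{*} = \nabla^{2}\hat{f}(\omega^{*})$ already lies in $\mathcal{S}_{\mu}$. Once both facts are established, the lemma follows immediately from the non-expansiveness of projection onto a closed convex set in a Hilbert space (here equipped with the Frobenius inner product).

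First I would verify that $[\mathbf{A}]_{\mu}$ is the Frobenius projection onto $\mathcal{S}_{\mu}$. Since $\mathcal{S}_{\mu}$ is the translate of the PSD cone $\mathcal{S}_{0}$ by $\mu \mathbf{I}$ and the Frobenius norm is translation invariant, it suffices to show that $[\mathbf{M}]_{0} = \sum_{i} \max\{\lambda_{i},0\} u_{i} u_{i}^{\top}$ is the Frobenius projection of a symmetric $\mathbf{M}$ onto $\mathcal{S}_{0}$. Writing any candidate $\mathbf{H} \succeq 0$ in the eigenbasis $\{u_{i}\}$ of $\mathbf{M}$ and using Von Neumann's trace inequality, one gets $\lVert \mathbf{H} - \mathbf{M}\rVert_{F}^{2} \geq \sum_{i}(\max\{\lambda_{i},0\} - \lambda_{i})^{2} = \lVert [\mathbf{M}]_{0} - \mathbf{M}\rVert_{F}^{2}$, with equality at $\mathbf{H} = [\mathbf{M}]_{0}$. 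Translating back by $\mu \mathbf{I}$ gives the claim for $[\cdot]_{\mu}$.

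Next, I would argue $\mathbf{\Pi}^{*} \in \mathcal{S}_{\mu}$. By Assumption~\ref{A-strongly_convex_VTEX1}, $\hat{f}$ is $\mu$-strongly convex, and by Assumption~\ref{A-Lipschitz} it is twice continuously differentiable, so strong convexity is equivalent to $\nabla^{2}\hat{f}(\omega) \succeq \mu \mathbf{I}$ for every $\omega$; in particular $\mathbf{\Pi}^{*} \succeq \mu \mathbf{I}$. Since Hessians of scalar-valued $C^{2}$ functions are symmetric, $\mathbf{\Pi}^{*} \in \mathcal{S}_{\mu}$. Applying the non-expansiveness of Frobenius projection with $\mathbf{A} = \mathbf{\Pi}$ and target $\mathbf{\Pi}^{*}$ yields $\lVert [\mathbf{\Pi}]_{\mu} - \mathbf{\Pi}^{*}\rVert_{F} \leq \lVert \mathbf{\Pi} - \mathbf{\Pi}^{*}\rVert_{F}$, which is the stated inequality.

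The main obstacle, and the only step requiring a non-trivial argument, is the identification in step one: namely, showing that the closed-form $[\cdot]_{\mu}$ really coincides with the Frobenius projection onto $\mathcal{S}_{\mu}$. One has to be careful that the projection of a symmetric input stays within the symmetric subspace (which it does because the symmetric matrices form a Frobenius-orthogonal subspace of $\mathbb{R}^{d\times d}$ that contains $\mathcal{S}_{\mu}$), and that zeroing out the negative eigenvalues is indeed optimal (the Von Neumann trace-inequality argument above). Everything afterwards is a one-line invocation of the standard non-expansiveness property, so no additional machinery is needed.
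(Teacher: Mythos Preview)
Your argument is correct and is essentially the same idea the paper relies on: the paper's proof simply cites Lemma~C.2 of~\cite{2DBLP:conf/icml/SafaryanIQR22}, which states exactly this non-expansiveness of the Frobenius projection onto $\{\mathbf{H}:\mathbf{H}^{\top}=\mathbf{H},\ \mathbf{H}\succeq\mu\mathbf{I}\}$ relative to any point already in that set. Your write-up just unpacks that citation into a self-contained proof via the closed-form eigenvalue-thresholding projection and the standard non-expansiveness property, so there is no substantive difference in approach.
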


\begin{proof}
The proof follows from Lemma $C.2$ of~\cite{2DBLP:conf/icml/SafaryanIQR22}.
\end{proof}
Lemma~\ref{le-projection} establishes that the proximity of the projected Hessian matrix $[\mathbf{\Pi}]_{\mu}$ to the optimal Hessian matrix $\mathbf{\Pi}^*$ is no greater than the proximity of the pre-projected Hessian matrix $\mathbf{\Pi}$ to $\mathbf{\Pi}^*$.

\begin{lemma}%
\label{le-stochastic_hessian_error_VTEX1}
Under Assumption~\ref{A-variance}(ii), we have
\begin{equation*}
\mathbb{E}\left \|\mathbf{\Pi}-\nabla ^{2} \hat{f}\left (\omega ^{*}
\right )\right \|^{2}\leq 2\mathbb{E}\left \|\nabla ^{2} \hat{f}
\left (\omega ^{0}\right )-\nabla ^{2} \hat{f}\left (\omega ^{*}
\right )\right \|_{F}^{2}+2 \kappa ^{2}.
\end{equation*}
\end{lemma}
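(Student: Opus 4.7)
The plan is to decompose $\mathbf{\Pi} - \nabla^2\hat{f}(\omega^*)$ into two pieces by inserting $\pm\nabla^2\hat{f}(\omega^0)$, and then bound each piece separately. Concretely, I would write
\[
\mathbf{\Pi} - \nabla^2\hat{f}(\omega^*) = \bigl(\mathbf{\Pi} - \nabla^2\hat{f}(\omega^0)\bigr) + \bigl(\nabla^2\hat{f}(\omega^0) - \nabla^2\hat{f}(\omega^*)\bigr),
\]
take spectral norms, square both sides, and apply the elementary inequality $\|a+b\|^2 \le 2\|a\|^2 + 2\|b\|^2$, followed by the standard fact that $\|\cdot\| \le \|\cdot\|_F$ for any matrix. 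This already matches the structure of the target bound, so the task reduces to controlling $\mathbb{E}\|\mathbf{\Pi} - \nabla^2\hat{f}(\omega^0)\|_F^2$ by $\kappa^2$.

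For the stochastic term, I would plug in the definitions $\mathbf{\Pi} = \tfrac{1}{N}\sum_{i=1}^N \nabla^2 F_i(\omega^0,\zeta_i^0)$ and $\nabla^2\hat{f}(\omega^0) = \tfrac{1}{N}\sum_{i=1}^N \nabla^2 f_i(\omega^0)$ to get
\[
\mathbf{\Pi} - \nabla^2\hat{f}(\omega^0) = \frac{1}{N}\sum_{i=1}^N \bigl(\nabla^2 F_i(\omega^0,\zeta_i^0) - \nabla^2 f_i(\omega^0)\bigr).
\]
Applying Jensen's inequality (or equivalently the convexity of $\|\cdot\|_F^2$) yields
\[
\Bigl\|\mathbf{\Pi}-\nabla^2\hat{f}(\omega^0)\Bigr\|_F^2 \le \frac{1}{N}\sum_{i=1}^N \bigl\|\nabla^2 F_i(\omega^0,\zeta_i^0) - \nabla^2 f_i(\omega^0)\bigr\|_F^2,
\]
and taking expectation, Assumption~\ref{A-variance}(ii) bounds each summand by $\kappa^2$, so the whole expression is at most $\kappa^2$. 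Multiplying by $2$ gives the $2\kappa^2$ term in the lemma.

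For the deterministic term $\|\nabla^2\hat{f}(\omega^0)-\nabla^2\hat{f}(\omega^*)\|^2$, I simply upper-bound the spectral norm by the Frobenius norm and move the expectation inside, which directly yields the first term on the right-hand side. Combining the two bounds completes the proof. I do not anticipate any real obstacle here: the argument is essentially a bias–variance decomposition of the stochastic Hessian average, and the only subtlety is remembering to pass from spectral to Frobenius norm (cost-free) so that Assumption~\ref{A-variance}(ii), which is stated in Frobenius norm, can be invoked. Using Jensen instead of an independence-based $\kappa^2/N$ bound is what produces the stated (looser but sufficient) constant $2\kappa^2$.
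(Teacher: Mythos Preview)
Your proposal is correct and follows essentially the same route as the paper's proof: insert $\pm\nabla^2\hat{f}(\omega^0)$, pass from spectral to Frobenius norm, split via $\|a+b\|^2\le 2\|a\|^2+2\|b\|^2$, and bound the stochastic piece by Jensen (the paper writes this as $\frac{1}{N^2}\cdot N\sum_i\|\cdot\|_F^2$) together with Assumption~\ref{A-variance}(ii). The only cosmetic difference is that the paper upgrades to the Frobenius norm before splitting rather than after, which is immaterial.
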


Lemma~\ref{le-stochastic_hessian_error_VTEX1} states that the error in using $\mathbf{\Pi}$ to approximate the Hessian of the population risk $\hat{f}$ at $\omega^*$ can be decomposed into two terms: the bias arising from the discrepancy between $\omega_0$ and $\omega^*$, and the variance introduced by data sampling when estimating the Hessian of the population risk $\hat{f}$ at $\omega_0$.

\begin{lemma}%
\label{le-stochastic_gradient_error_VTEX1}
Under Assumption~\ref{A-variance}(i), we have
\begin{equation*}
\begin{aligned}
\mathbb{E}\left \|\nabla ^{2} \hat{f}\left (\omega ^{*}\right )
\left (\omega ^{t}-\omega ^{*}\right )-\nabla F^{t}+\nabla \hat{f}
\left (\omega ^{*}\right )\right \|^{2}
\leq 2\mathbb{E}\left \|\nabla ^{2} \hat{f}\left (\omega ^{*}\right )
\left (\omega ^{t}-\omega ^{*}\right )-\nabla f^{t}+\nabla \hat{f}
\left (\omega ^{*}\right )\right \|^{2}+2\left(\frac{N}{\psi ^{*}}+1\right) \Lambda ^{2}.
\end{aligned}
\end{equation*}
\end{lemma}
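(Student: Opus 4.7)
The plan is to split the stochastic linearization error into a bias part (measured against the deterministic proxy $\nabla f^t$) and a pure sampling-error part (between $\nabla f^t$ and $\nabla F^t$). Introducing $\nabla f^t$ as a pivot inside the norm, I can write
\[
\nabla^{2}\hat f(\omega^{*})(\omega^{t}-\omega^{*}) - \nabla F^{t} + \nabla\hat f(\omega^{*})
= \bigl[\nabla^{2}\hat f(\omega^{*})(\omega^{t}-\omega^{*}) - \nabla f^{t} + \nabla\hat f(\omega^{*})\bigr] + \bigl[\nabla f^{t} - \nabla F^{t}\bigr],
\]
and then apply the elementary inequality $\|x+y\|^{2}\le 2\|x\|^{2}+2\|y\|^{2}$ under the expectation. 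The first summand reproduces exactly the quantity appearing on the right-hand side of the lemma with the correct factor of $2$, so the remaining task reduces to showing $\mathbb{E}\|\nabla F^{t} - \nabla f^{t}\|^{2} \le (N/\psi^{*}+1)\Lambda^{2}$.

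To obtain this residual variance bound, I would decompose the error region by region using the disjointness of the $Q$ regions, writing $\nabla F^{t}-\nabla f^{t} = \sum_{q}(\nabla F^{t,q}-\nabla f^{t,q})$ with $\nabla F^{t,q}-\nabla f^{t,q} = \frac{1}{N}\sum_{i=1}^{N}(\Theta_{i}^{t,q}-\theta_{i}^{t,q})$. By the parallel recursive definitions of $\Theta_{i}^{t,q}$ and $\theta_{i}^{t,q}$, each summand equals, by induction on the round, a single-round stochastic-gradient noise $\nabla F_{i}(\omega_{i}^{t'},\zeta_{i}^{t'})\odot\tau_{i}^{t'}|_{q} - \nabla f_{i}^{t',q}$ evaluated at the last round $t'\le t$ when worker $i$ trained region $q$, which by Assumption~\ref{A-variance}(i) is mean-zero with squared expectation at most $\Lambda^{2}$. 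Splitting each region-wise sum into the "fresh" subset $\mathcal{C}^{t,q}$ and its stale complement, applying $\|x+y\|^{2}\le 2\|x\|^{2}+2\|y\|^{2}$ to decouple the two, and invoking independence of the per-round per-worker samples lets the fresh contribution scale as $|\mathcal{C}^{t,q}|\Lambda^{2}/N^{2}$; invoking the coverage lower bound $|\mathcal{C}^{t,q}|\ge\psi^{*}$ allows one to trade $1/N^{2}$ against $1/(\psi^{*}N)$, producing the factor $N/\psi^{*}$. The isolated "$+1$" then absorbs the single-round fresh-noise contribution that does not undergo staleness.

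The main obstacle will be the stale portion. Unlike the fresh sum, the noises $\Theta_{i}^{t-1,q}-\theta_{i}^{t-1,q}$ for different workers originate from heterogeneous past rounds, so they cannot be aggregated with a single independence-of-samples argument anchored at $\omega^{t}$. I expect to handle this by unrolling the recursion defining $\Theta_{i}^{\cdot,q}$ and $\theta_{i}^{\cdot,q}$ one round at a time and exploiting the exact parity of their definitions so that the pair remains zero-mean and $\Lambda^{2}$-bounded irrespective of staleness, thereby allowing the same Cauchy-Schwarz/independence step used for the fresh contributions. Assembling these pieces, summing cleanly over $q$ using the disjointness of regions (so no spurious $Q$ factor appears), and applying the uniform coverage bound $\psi^{*}$ should deliver the claimed inequality $\mathbb{E}\|\nabla F^{t}-\nabla f^{t}\|^{2}\le (N/\psi^{*}+1)\Lambda^{2}$, completing the proof.
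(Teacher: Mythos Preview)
Your opening move---inserting $\nabla f^{t}$ as a pivot and applying $\|x+y\|^{2}\le 2\|x\|^{2}+2\|y\|^{2}$---matches the paper exactly, and reducing the task to $\mathbb{E}\|\nabla F^{t}-\nabla f^{t}\|^{2}\le (N/\psi^{*}+1)\Lambda^{2}$ is the right target. The region-wise splitting via disjointness is also correct. The gap is in how you extract the constant.

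Your ``trading'' step is logically backwards. Having arrived (via independence) at a fresh contribution of order $|\mathcal{C}^{t,q}|\Lambda^{2}/N^{2}$ per region, you then invoke the \emph{lower} bound $|\mathcal{C}^{t,q}|\ge\psi^{*}$ on a quantity sitting in the \emph{numerator}; that cannot yield an upper bound. In fact, if your independence argument were carried through, summing over regions and workers would collapse to $\Lambda^{2}/N$, which is already smaller than $N\Lambda^{2}/\psi^{*}$---so no trading is needed, but then you would have to justify independence for the stale terms, which (as you yourself flag) is delicate because the evaluation points $\omega_{i}^{t'}$ depend on other workers' past noise. You also have the roles of $N/\psi^{*}$ and the ``$+1$'' reversed.

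The paper avoids all of this by never using independence. It splits \emph{regions} (not workers within a region) into the currently trained set $\mathcal{A}^{t}$ and its complement. For $q\in\mathcal{A}^{t}$ it writes the region aggregate with normalization $1/|\mathcal{C}^{t,q}|$, applies Jensen $\bigl\|\tfrac{1}{m}\sum x_{i}\bigr\|^{2}\le\tfrac{1}{m}\sum\|x_{i}\|^{2}$, replaces $1/|\mathcal{C}^{t,q}|$ by the upper bound $1/\psi^{*}$, extends the worker sum from $\mathcal{C}^{t,q}$ to all of $[N]$ (the added terms vanish because of the mask), swaps the order of summation, and uses $\sum_{q}\|(\cdot)^{q}\|^{2}=\|(\cdot)\|^{2}\le\Lambda^{2}$ per worker to obtain the $\tfrac{N}{\psi^{*}}\Lambda^{2}$ term. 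For $q\notin\mathcal{A}^{t}$ the same Jensen step with normalization $1/N$ gives the additive $\Lambda^{2}$, i.e.\ the ``$+1$''. No recursion unrolling is required for the stale pieces: each $\Theta_{i}^{t,q}-\theta_{i}^{t,q}$ is a masked stochastic-gradient error at \emph{some} past iterate, and Assumption~\ref{A-variance}(i) bounds its second moment by $\Lambda^{2}$ pointwise, which is all Jensen needs.
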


Lemma~\ref{le-stochastic_gradient_error_VTEX1} essentially indicates that the variance in estimating $\nabla f^t$ using $\nabla F^t$, i.e., $\mathbb{E}[\|\nabla f^t-\nabla F^t\|^2]$, is bounded above by $2\left(\frac{N}{\psi ^{*}}+1\right) \Lambda ^{2}$. Subsequently, Lemma~\ref{le-pruning_error_VTEX1} demonstrates that the first term on the right-hand side of Lemma~\ref{le-stochastic_gradient_error_VTEX1} can also be bounded from above.

\begin{lemma}%
\label{le-pruning_error_VTEX1}
Under Assumption~\ref{A-Lipschitz} and~\ref{A-prun}, we have
\begin{equation*}
\begin{aligned}
\mathbb{E}\left \|\nabla f^{t}-\nabla \hat{f}\left (\omega ^{*}
\right )-\nabla ^{2} \hat{f}\left (\omega ^{*}\right )\left (\omega ^{t}-
\omega ^{*}\right )\right \|^{2}
\leq \frac{N}{\psi ^{*}}\left (2\iota _{g}^{2}\delta ^{2}\right )+
(\gamma^{t})^{2} \frac{8(L^{2}+\Lambda ^{2})\iota _{g}^{2}}{\mu ^{2}}+4\iota _{g}^{2}
\delta ^{2}+ \frac{\iota _{h}^{2}}{2}\mathbb{E}\left \|
\omega ^{t}-\omega ^{*}\right \|^{4}.
\end{aligned}
\end{equation*}
\end{lemma}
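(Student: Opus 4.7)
The plan is to split the target squared norm via $\|a+b\|^2\le 2\|a\|^2+2\|b\|^2$ into an aggregation error plus a Taylor-remainder error:
\[
\mathbb{E}\|\nabla f^t - \nabla\hat f(\omega^*) - \nabla^2\hat f(\omega^*)(\omega^t-\omega^*)\|^2 \le 2\mathbb{E}\|\nabla f^t - \nabla\hat f(\omega^t)\|^2 + 2\mathbb{E}\|\nabla\hat f(\omega^t) - \nabla\hat f(\omega^*) - \nabla^2\hat f(\omega^*)(\omega^t-\omega^*)\|^2.
\]
The remainder piece is a standard Taylor bound: the $\iota_h$-Lipschitz Hessian of each $f_i$ from Assumption~\ref{A-Lipschitz} transfers to $\hat f$, so the integral form of Taylor's theorem gives $\|\nabla\hat f(\omega^t)-\nabla\hat f(\omega^*)-\nabla^2\hat f(\omega^*)(\omega^t-\omega^*)\|\le \frac{\iota_h}{2}\|\omega^t-\omega^*\|^2$, yielding the $\frac{\iota_h^2}{2}\mathbb{E}\|\omega^t-\omega^*\|^4$ summand after squaring.

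For the aggregation piece, I exploit the disjointness of the $Q$ regions to write $\|\nabla f^t-\nabla\hat f(\omega^t)\|^2=\sum_q \|\nabla f^{t,q}-\nabla^q\hat f(\omega^t)\|^2$, where $\nabla^q\hat f(\omega^t)$ denotes the restriction of the global gradient to region $q$. For each $q$ I split the $N$ summands by whether $i\in\mathcal{C}^{t,q}$ (fresh) or not (stale). For the fresh workers, $\theta_i^{t,q}=\nabla_q f_i(\omega_i^t)$ and the discrepancy with $\nabla_q f_i(\omega^t)$ is controlled by $\iota_g\|\omega_i^t-\omega^t\|\le\iota_g\delta$ via the $\iota_g$-Lipschitz gradient plus Assumption~\ref{A-prun}, producing the $4\iota_g^2\delta^2$ summand after the routine factor-of-two inflation. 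For the stale workers, $\theta_i^{t,q}=\nabla_q f_i(\omega_i^{s})$ for some $s=s_i(t,q)\ge t-\gamma^t$, and I decompose the trajectory $\omega_i^s \to \omega^s \to \omega^t$: the pruning leg contributes $\delta$, while the staleness leg is telescoped as $\|\omega^s-\omega^t\|\le\sum_{k=s}^{t-1}\|\omega^{k+1}-\omega^k\|\le\frac{1}{\mu}\sum_{k=s}^{t-1}\|\nabla F^k\|$, since $\|[\mathbf{\Pi}]_\mu^{-1}\|\le 1/\mu$ follows immediately from the spectral description of the projection in Definition~\ref{definiton:projection}. Cauchy--Schwarz in $k$ then produces the $(\gamma^t)^2$ multiplier, and the bound $\mathbb{E}\|\nabla F^k\|^2\le L^2+\Lambda^2$ (from $\iota$-Lipschitz continuity of each $f_i$ plus Assumption~\ref{A-variance}(i)) yields the $(\gamma^t)^2\frac{8(L^2+\Lambda^2)\iota_g^2}{\mu^2}$ summand once the $\iota_g$-Lipschitz gradient converts the position discrepancy into a gradient discrepancy.

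The main technical hurdle will be producing the coverage factor $\frac{N}{\psi^*}$ in front of $2\iota_g^2\delta^2$. Because $\nabla f^{t,q}=\frac{1}{N}\sum_i\theta_i^{t,q}$ averages all $N$ workers while only $|\mathcal{C}^{t,q}|\ge\psi^*$ of them contribute a current-round $\iota_g\delta$-scale discrepancy, summing the per-region squared norms across $q$ generates combinatorial factors of the form $\sum_q |\mathcal{C}^{t,q}|^2/N^2$ that must be converted into the advertised $N/\psi^*$ blow-up via $|\mathcal{C}^{t,q}|\ge\psi^*$ together with a counting argument on the fresh $(i,q)$ pairs. Once this bookkeeping step is in place, the remaining estimates reduce to repeated application of the triangle inequality, Jensen's inequality and Cauchy--Schwarz, after which the four summands assemble into the claimed bound.
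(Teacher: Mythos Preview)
Your proposal is correct and follows essentially the same route as the paper: the same $A_1/A_2$ split, the same Hessian-Lipschitz Taylor bound for $A_2$, and for $A_1$ the same region decomposition, the same pruning-versus-staleness split, the same telescoping of $\omega^s-\omega^t$ via $\|[\mathbf{\Pi}]_\mu^{-1}\|\le 1/\mu$, and the same bound $\mathbb{E}\|\nabla F^k\|^2\le L^2+\Lambda^2$ to arrive at the $(\gamma^t)^2$ term.

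The only organizational difference worth noting concerns the $N/\psi^*$ factor you flagged as a hurdle. The paper does not split workers within a region into fresh/stale as you propose; instead it splits the regions themselves into trained ($q\in\mathcal{A}^t$) versus untrained ($q\in\mathcal{A}\setminus\mathcal{A}^t$). For trained regions it writes the aggregate as an average over $\mathcal{C}^{t,q}$ with weight $1/|\mathcal{C}^{t,q}|$, applies Jensen, and then replaces $1/|\mathcal{C}^{t,q}|$ by $1/\psi^*$; summing over $i\in[N]$ and over $q$ (and using $\sum_q\|\cdot^q\|^2\le\|\cdot\|^2$) gives $\frac{N}{\psi^*}\iota_g^2\delta^2$ directly, without the counting argument on fresh $(i,q)$ pairs you anticipated. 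The $4\iota_g^2\delta^2$ term then arises separately from the pruning leg of the \emph{untrained} regions (your trajectory decomposition $\omega_i^s\to\omega^s$), not from the fresh workers. So the attribution of the two $\delta^2$ terms is swapped relative to your sketch, but the mechanics are identical.
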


To present Theorem~\ref{theorem-rate}, we introduce the following notations:
$a= \frac{\iota _{h}^{2}}{2}$,
$b=\frac{\mu ^{2}}{16}-\kappa ^{2}$,
$c=\frac{N}{\psi ^{*}}\left (2\iota _{g}^{2}\delta ^{2}+\Lambda ^{2}
\right )+(\gamma^{t})^{2} \frac{8(L^{2}+\Lambda ^{2})\iota _{g}^{2}}{\mu ^{2}}+4
\iota _{g}^{2}\delta ^{2}+ \Lambda ^{2}$ and $\rho =b^{2}-4ac$.

\begin{theorem}%
\label{theorem-rate}
Let Assumption~\ref{A-strongly_convex_VTEX1},~\ref{A-Lipschitz},~\ref{A-variance}
and~\ref{A-prun} hold. Suppose (i) $\mu\ge 4\kappa$, (ii) $\rho\ge 0$, (iii) $\frac{b-\sqrt{\rho}}{2a}\le\frac{\mu}{4\iota_h}$, and (iv) $\frac{b-\sqrt{\rho}}{2a}\le\|\omega^0-\omega^*\|\le\min\left\{\frac{\mu}{4\iota_h}, \frac{b+\sqrt{\rho}}{2a}\right\}$. Then \texttt{DANL} (Algorithm~\ref{algo1}) converges linearly with the rate
\begin{equation*}
\mathbb{E}\|\omega ^{t}-\omega ^{*}\|^{2}\leq \frac{1}{2^t}
\mathbb{E}\left \|\omega ^{0}-\omega ^{*}\right \|^{2}.
\end{equation*}
\end{theorem}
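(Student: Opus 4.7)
The plan is to derive a one-step inequality for $D_t:=\mathbb{E}\|\omega^t-\omega^*\|^2$ by decomposing the error into a Hessian-approximation piece and a gradient-approximation piece, bound each piece via Lemmas~\ref{le-projection}--\ref{le-pruning_error_VTEX1}, and then close the argument by induction using conditions (i)--(iv).

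First I would start from the \texttt{DANL} update rule and, using $\nabla \hat{f}(\omega^*)=0$, rewrite the error recursion as
\begin{equation*}
\omega^{t+1}-\omega^* = [\mathbf{\Pi}]_\mu^{-1}\Big(\big([\mathbf{\Pi}]_\mu-\mathbf{\Pi}^*\big)(\omega^t-\omega^*) + \big(\mathbf{\Pi}^*(\omega^t-\omega^*)-\nabla F^t+\nabla \hat{f}(\omega^*)\big)\Big).
\end{equation*}
Since $[\mathbf{\Pi}]_\mu\succeq\mu\mathbf{I}$ by Definition~\ref{definiton:projection}, we have $\|[\mathbf{\Pi}]_\mu^{-1}\|\le 1/\mu$. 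Squaring, applying $(x+y)^2\le 2x^2+2y^2$, and taking expectations then gives
\begin{equation*}
D_{t+1}\le \frac{2}{\mu^2}\Big(\mathbb{E}\|[\mathbf{\Pi}]_\mu-\mathbf{\Pi}^*\|^2\,\|\omega^t-\omega^*\|^2 + \mathbb{E}\|\mathbf{\Pi}^*(\omega^t-\omega^*)-\nabla F^t+\nabla \hat{f}(\omega^*)\|^2\Big).
\end{equation*}

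Next I would plug in the four preparatory lemmas. Combining Lemma~\ref{le-projection} and Lemma~\ref{le-stochastic_hessian_error_VTEX1} with the $\iota_h$-Lipschitz Hessian bound from Assumption~\ref{A-Lipschitz} yields $\mathbb{E}\|[\mathbf{\Pi}]_\mu-\mathbf{\Pi}^*\|^2\le 2\iota_h^2\|\omega^0-\omega^*\|^2+2\kappa^2$. Chaining Lemma~\ref{le-stochastic_gradient_error_VTEX1} with Lemma~\ref{le-pruning_error_VTEX1} bounds the gradient-error term by exactly $2c+\iota_h^2\,\mathbb{E}\|\omega^t-\omega^*\|^4$, where $c$ is the constant defined just before the theorem. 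Substituting produces the mixed recurrence
\begin{equation*}
D_{t+1}\le \frac{2}{\mu^2}\Big[(2\iota_h^2\|\omega^0-\omega^*\|^2+2\kappa^2)\,D_t + \iota_h^2\,\mathbb{E}\|\omega^t-\omega^*\|^4 + 2c\Big].
\end{equation*}

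Finally I would close the induction $D_t\le 2^{-t}\|\omega^0-\omega^*\|^2$. The base case is immediate; for the inductive step I would strengthen the hypothesis to also carry the pathwise bound $\|\omega^t-\omega^*\|\le\|\omega^0-\omega^*\|$, so that $\mathbb{E}\|\omega^t-\omega^*\|^4\le\|\omega^0-\omega^*\|^2 D_t$ and the recurrence collapses to a purely linear one in $D_t$ plus the constant $2c$. Condition (i) $\mu\ge 4\kappa$ forces $b=\mu^2/16-\kappa^2\ge 0$, condition (iii) forces $\iota_h\|\omega^0-\omega^*\|\le\mu/4$, and conditions (ii) and (iv) together are precisely the statement that $\|\omega^0-\omega^*\|$ lies in the region where $a\|\omega^0-\omega^*\|^2-b\|\omega^0-\omega^*\|+c\le 0$; this algebraic fact is what converts the linearized recurrence into the clean contraction $D_{t+1}\le \tfrac{1}{2}D_t$.

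The main obstacle I expect is algebraic bookkeeping: the one-step bound mixes $D_t$, the fourth moment $\mathbb{E}\|\omega^t-\omega^*\|^4$, and a noise floor $2c$ that does not vanish with $t$. Absorbing the fourth moment cleanly requires a pathwise (or jointly inductive) control on $\|\omega^t-\omega^*\|$, and absorbing $2c$ requires that $\|\omega^0-\omega^*\|$ lie in the specific interval $[\tfrac{b-\sqrt{\rho}}{2a},\tfrac{b+\sqrt{\rho}}{2a}]$ dictated by conditions (ii)--(iv). Tracking all the constants through Lemmas~\ref{le-projection}--\ref{le-pruning_error_VTEX1} so that they aggregate exactly into $a$, $b$, and $c$ is the step most prone to error, and an additional subtlety is that $\gamma^t$ inside $c$ must be treated as bounded for the constant term to remain $t$-uniform.
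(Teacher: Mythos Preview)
Your decomposition, lemma chain, and one-step recurrence are essentially identical to the paper's, and you correctly identify that conditions (ii)--(iv) encode the quadratic inequality $a\|\omega^0-\omega^*\|^2-b\|\omega^0-\omega^*\|+c\le 0$ needed to absorb the noise floor.

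The divergence is in how the induction is closed. The paper does \emph{not} attempt a pathwise bound. Instead it observes that the quadratic inequality is equivalent to the interval $\bigl[c/\|\omega^0-\omega^*\|,\ b-a\|\omega^0-\omega^*\|\bigr]$ being nonempty, picks an auxiliary scalar $\alpha$ in that interval, uses $c\le\alpha\|\omega^0-\omega^*\|$ to replace the additive constant in the recurrence by a multiplicative $\alpha$-term, and then inducts on the second-moment invariant $\mathbb{E}\|\omega^t-\omega^*\|^2\le (b-\alpha)/a$. Under that invariant the fourth-moment contribution is bounded by $\tfrac{\iota_h^2}{2}\cdot\tfrac{b-\alpha}{a}=b-\alpha$, and since $\|\omega^0-\omega^*\|\le\mu/(4\iota_h)$ gives $\|\nabla^2\hat f(\omega^0)-\nabla^2\hat f(\omega^*)\|_F^2\le\mu^2/16$, the bracketed factor collapses to $\mu^2/16+\kappa^2+\alpha+(b-\alpha)=\mu^2/8$, yielding the contraction $\tfrac12$.

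Your proposed pathwise strengthening $\|\omega^t-\omega^*\|\le\|\omega^0-\omega^*\|$ is a genuine gap. The update uses stochastic gradients $\nabla F^t$ built from random samples $\zeta_i^t$, so the one-step bound you derived controls only expectations; nothing in the argument prevents $\|\omega^{t+1}-\omega^*\|$ from exceeding $\|\omega^0-\omega^*\|$ on individual sample paths, and hence $\mathbb{E}\|\omega^t-\omega^*\|^4\le\|\omega^0-\omega^*\|^2 D_t$ cannot be justified this way. The paper's $\alpha$ device is precisely what replaces the need for any almost-sure control, keeping the entire induction at the level of second moments.
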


From Theorem~\ref{theorem-rate}, it can be observed that, with an appropriately chosen initial model, the local linear convergence rate of the model iterates depends solely on a universal numerical constant $\frac{1}{2}$, rather than any problem-specific constants. This implies that we obtain a the rate that is independent of the condition number.

\section{Experiments}
\label{sec6}

In this section, we conduct extensive experiments to study the performance
of our proposed algorithm DANL. We consider the problem \myref{eq-goal} with local loss functions:
\begin{align*}
& F_{i}(\omega ) = -\frac{1}{m}\left (\sum _{j = 1}^{m}
\left (b_{i,j}\log p(a_{i,j}^{T}\omega ) + (1 - b_{i,j})\log \left (1 -
p\left (a_{i,j}^{T}\omega \right )\right )\right )\right ) +
\frac{\lambda}{2m}||\omega ||^{2},
\end{align*}
where $p(z) = \frac{1}{1 + e^{-z}}$.
$\{a_{i,j}, b_{i,j}\}_{j\in [m]}$ are data points at the $i$-th device
and $\lambda >0$ is a regularization parameter. The datasets were taken
from LibSVM library~\cite{chang2011libsvm}: a1a, a2a, a3a, and phishing.

\subsection{Parameter setting}
\label{sec6.1}

In the distributed optimization framework, we construct submodels with
parameters allocated in a non-deterministic manner. This approach implies
that not every parameter set is updated in each iteration. To achieve this
goal, we randomly select partial regions not to be trained periodically.
The global model, denoted as $\omega $, is divided into
four distinct segments:
$\omega = \{\omega ^{1}, \omega ^{2}, \omega ^{3}, \omega ^{4}\}$. For
instance, a client may choose to update only the segments
$\{\omega ^{1}, \omega ^{3}\}$ during a given iteration.

A single regularization parameter, $\lambda = 10^{-4}$, is utilized. The
figures illustrate the relationship between the optimality gap,
$F(\omega ^{k}) - F(\omega ^{*})$, and the progression of communication
rounds. Here, $F(\omega ^{*})$ represents the function
value after the $20$-th iteration using the standard Newton's method, serving
as our benchmark for optimality. The initial parameter set,
$\omega ^{0}$, is derived from the $10$-th iteration output
of FedAvg~\cite{DBLP:conf/aistats/McMahanMRHA17} algorithm.

We set $10$ workers in the training process. The experiments
were designed to evaluate the impact of three key parameters: the minimum
coverage number $\psi ^{*}$, the maximum number of continuously non-trained rounds
$\gamma^{t}$ and the minimum number of trained regions
$S^{*} \triangleq \min_{t} |\mathcal{A}_t| $, where $\mathcal{A}_t $ is the set of trained regions in round $t$, representing the minimum number of trained regions across all rounds.

\subsection{Impact of key factors}
\label{sec6.2}

\textit{Impact of minimum coverage number $\psi ^{*}$ and minimum number
of trained regions $S^{*}$}. Combined with our theoretical analysis, we
study two key factors impacting convergence: $\psi ^{*}$ and $S^{*}$. Through
varying $\psi ^{*}$ and $S^{*}$, we set three distinct scenarios: ($
\psi ^{*}=1, S^{*}=4$), ($\psi ^{*}=3, S^{*}=4$), ($\psi ^{*} = 10, S^{*}=1$),
as shown in Fig.~\ref{fig1}.

\begin{figure}[!htb]
\centering
\subfigure[a1a]{
\includegraphics[scale=0.45]{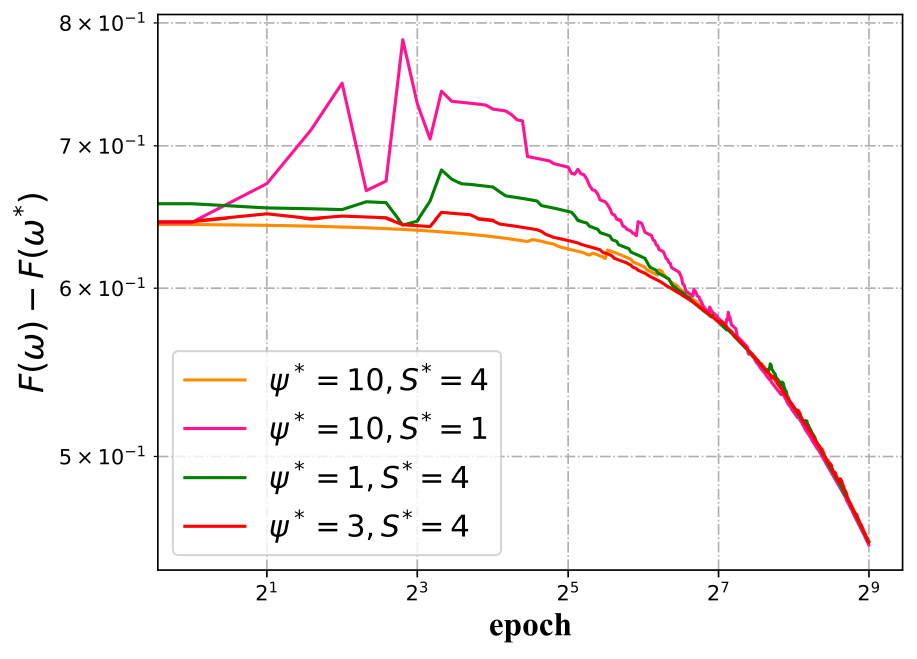} 
}
\quad
\subfigure[a2a]{
\includegraphics[scale=0.45]{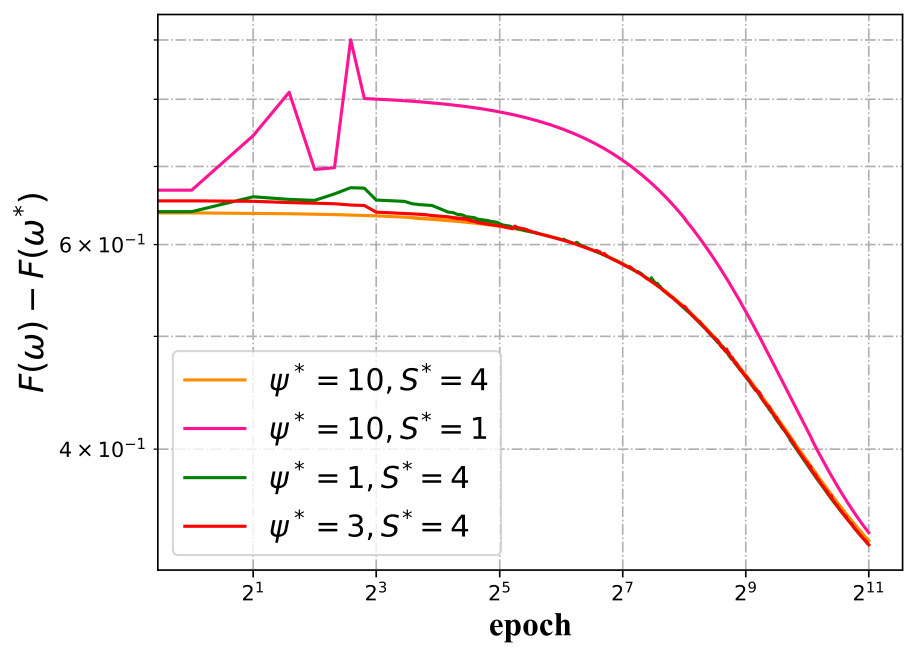} 
}
\quad
\subfigure[a3a]{
\includegraphics[scale=0.45]{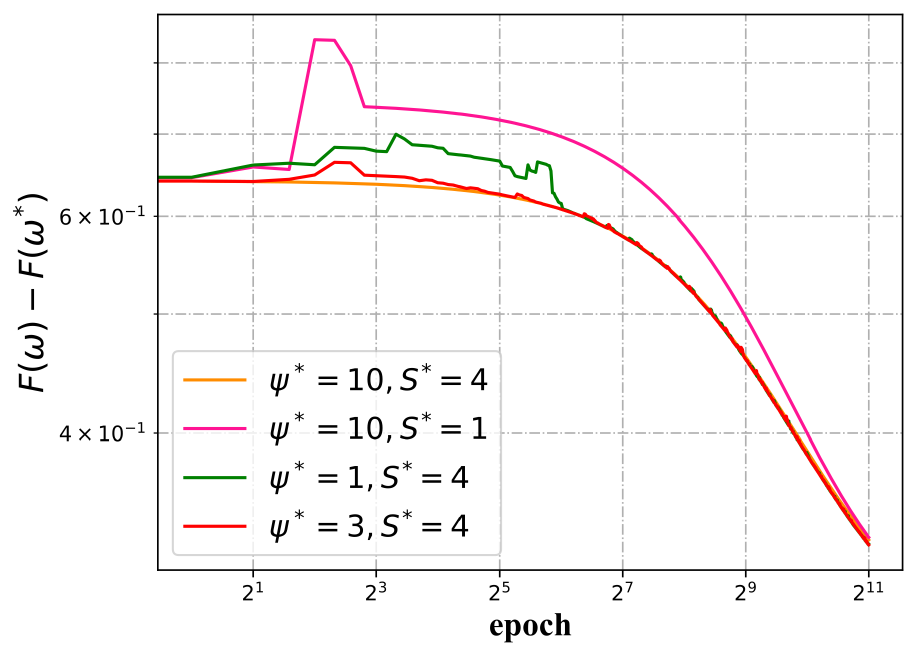} 
}
\quad
\subfigure[phishing]{
\includegraphics[scale=0.45]{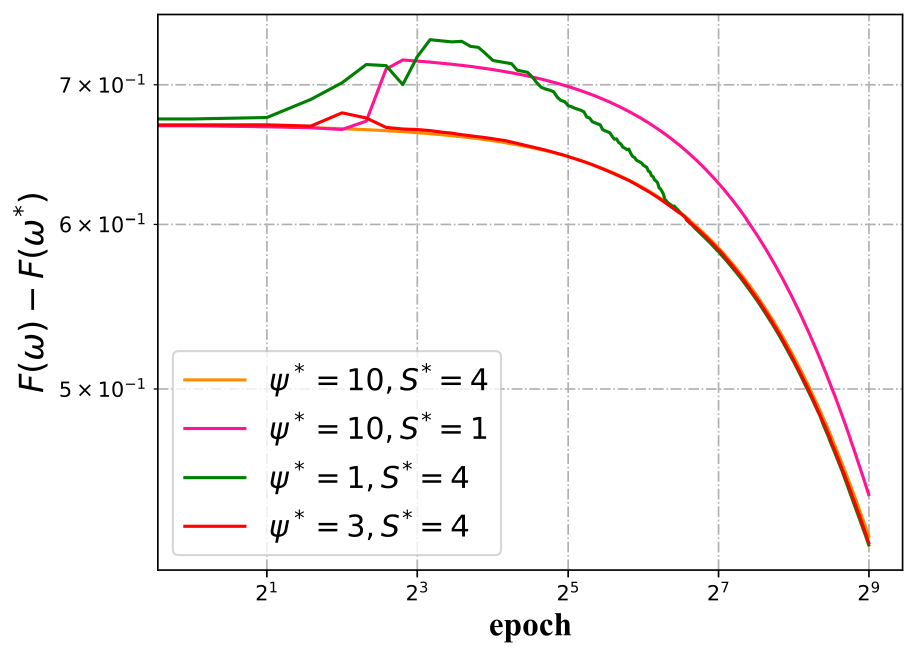}
}
\caption{The impact of $\psi^*$ and $S^{*}$ of DANL}
\label{fig1}
\end{figure}

When fixing $S^{*}=4$, we observe that $\psi ^{*}=3$ converges faster compared
to $\psi ^{*}=1$, which is consistent with our theoretical analysis. It
is worth noting that in the early stages of training, the loss function
initially experiences a period of increase before gradually decreasing.
This is primarily because initially only subregions can be trained, and
as training progresses sufficiently, all regions receive adequate training.

The increase in $\psi ^{*}$ does not necessarily imply faster convergence
speed. Consider an extreme example with the largest $\psi ^{*}$ (e.g. $\psi ^{*}=10$,
$S^{*}=1$). In this scenario, training is restricted to a
single subregion per round within a set of $10$ submodels. However, the
convergence speed of the loss function is very slow. This indicates that
when many subregions remain untrained, the model performance deteriorates.
Therefore, we can conclude that a combination of factors influences performance
and convergence rate.

\textit{Impact of the maximum number of continuously non-trained rounds
$\gamma^{t}$.} We further consider two key factors impacting convergence
in DANL: $\psi ^{*}$ and $\gamma^{t}$. Fixing other impacting factors
in DANL: we set $S^{*} = 4$. Considering $\psi ^{*}$ and
$\gamma^{t}$, we set four combinations: ($\psi ^{*}=1, \gamma^{t}=4$),
($\psi ^{*}=2, \gamma^{t}=4$), ($\psi ^{*}=1, \gamma^{t}=2$), ($
\psi ^{*}=2, \gamma^{t}=2$), as shown in Fig.~\ref{fig2}.
\begin{figure}[!htb]
\centering
\subfigure[a1a]{
\includegraphics[scale=0.45]{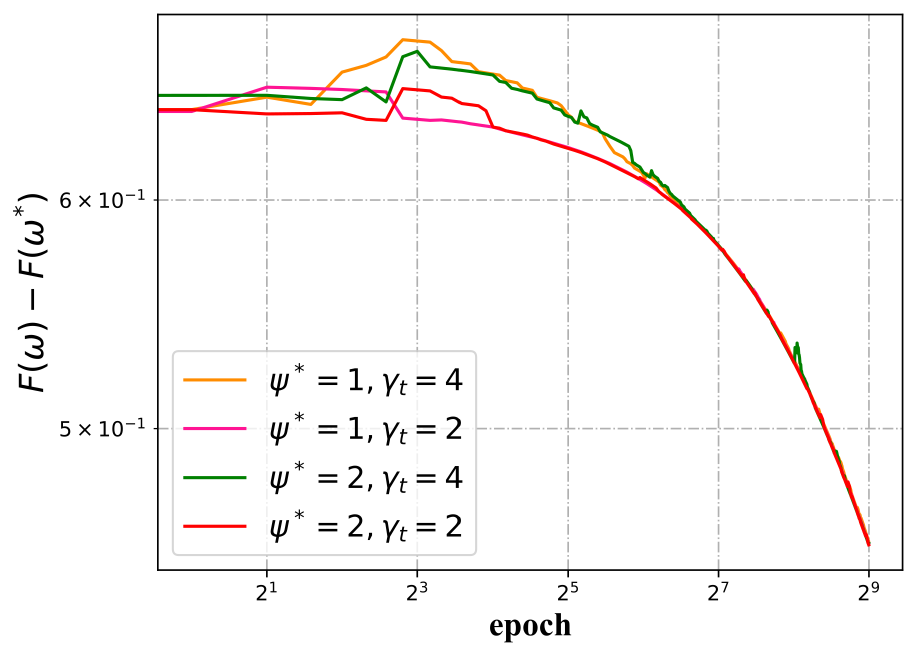} 
}
\quad
\subfigure[a2a]{
\includegraphics[scale=0.45]{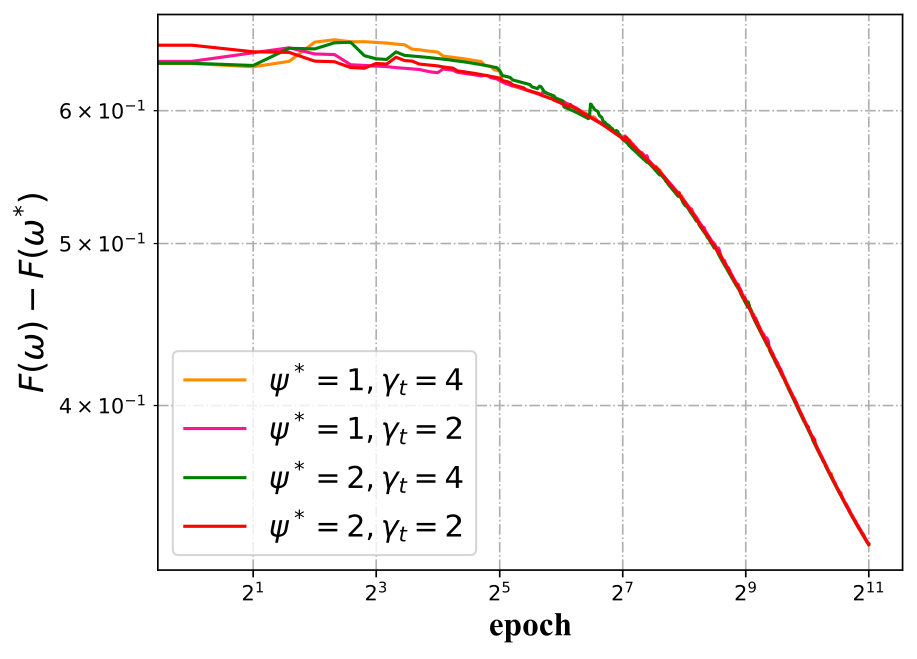}  
}
\quad
\subfigure[a3a]{
\includegraphics[scale=0.45]{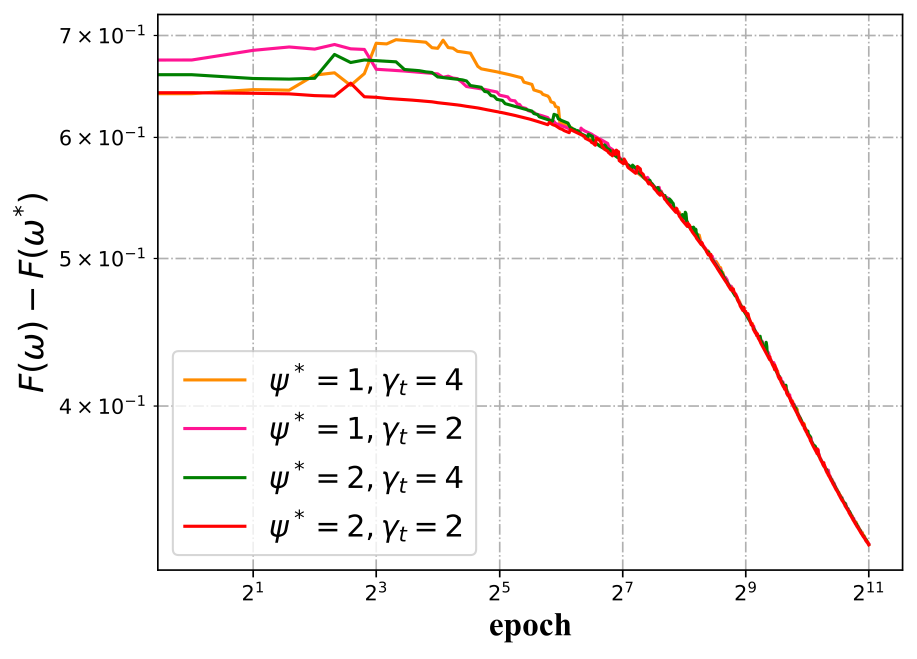}
}
\quad
\subfigure[phishing]{
\includegraphics[scale=0.45]{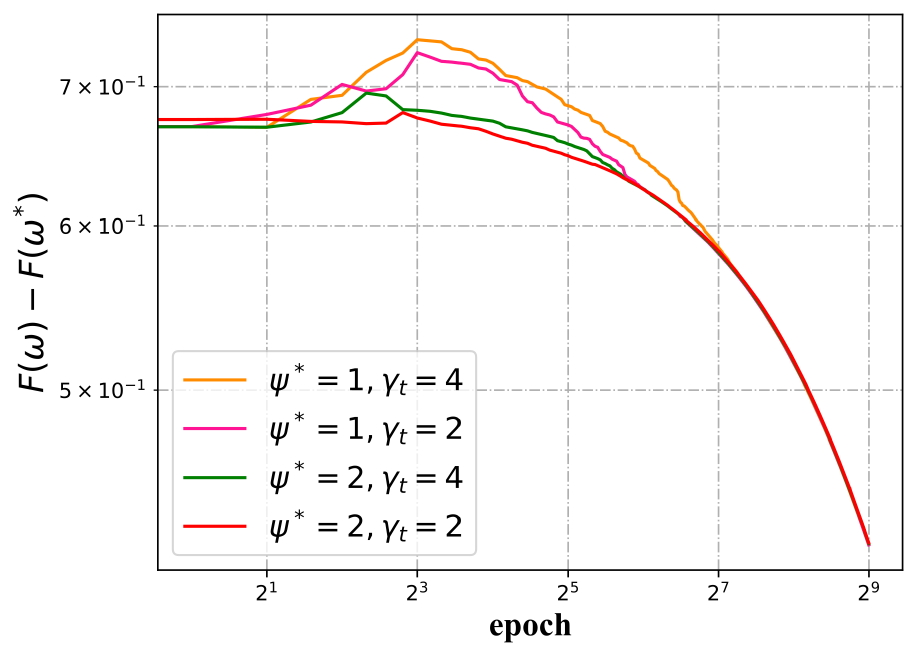}
}
\caption{The impact of $\psi^*$ and $\gamma_t$ of DANL}
\label{fig2}
\end{figure}

When $\psi ^{*}$ is fixed, $\gamma^{t}=2$ outperforms
$\gamma^{t}=4$ in terms of convergence speed. This is because, with
$\gamma^{t}=2$, each parameter receives more thorough training. Similarly,
when $\gamma^{t}$ is fixed, it is evident that with the increase of
$\psi ^{*}$, performance improves, further indicating that larger
$\psi ^{*}$ leads to more comprehensive training of subregions. Overall,
$\psi ^{*}$ plays a dominant role compared to $\gamma^{t}$.
$\gamma^{t}$ has a slight impact on convergence.

\newpage
\appendix

\section{Appendix}\label{Theoretical_Analysis_VTEX1}
\begin{proof}[Proof of Lemma~\ref{le-stochastic_hessian_error_VTEX1}]
\begin{align*}
&~~~~ \mathbb{E}\left \|\mathbf{\Pi}-\nabla ^{2} \hat{f}\left (
\omega ^{*}\right )\right \|^{2}
\\
&= \mathbb{E}\left \|\nabla ^{2}\hat{F}\left (\omega ^{0}\right )-
\nabla ^{2} \hat{f}\left (\omega ^{*}\right )\right \|^{2}
\\
&=\mathbb{E}\left \|\nabla ^{2} \hat{f}\left (\omega ^{0}\right )-
\nabla ^{2} \hat{f}\left (\omega ^{*}\right )+\nabla ^{2}\hat{F}
\left (\omega ^{0}\right )-\nabla ^{2} \hat{f}\left (\omega ^{0}
\right )\right \|^{2}
\\
&\le\mathbb{E}\left \|\nabla ^{2} \hat{f}\left (\omega ^{0}\right )-
\nabla ^{2} \hat{f}\left (\omega ^{*}\right )+\nabla ^{2}\hat{F}
\left (\omega ^{0}\right )-\nabla ^{2} \hat{f}\left (\omega ^{0}
\right )\right \|_{F}^{2}
\\
&\leq 2\mathbb{E}\left \|\nabla ^{2} \hat{f}\left (\omega ^{0}\right )-
\nabla ^{2} \hat{f}\left (\omega ^{*}\right )\right \|_{F}^{2}+2
\mathbb{E}\left \|\nabla ^{2} \hat{F}\left (\omega ^{0}\right )-
\nabla ^{2} \hat{f}\left (\omega ^{*}\right )\right \|_{F}^{2}
\\
&\leq 2\mathbb{E} \left \|\nabla ^{2} \hat{f}\left (\omega ^{0}
\right )-\nabla ^{2} \hat{f}\left (\omega ^{*}\right )\right \|_{F}^{2}+2
\mathbb{E}\left \|
\frac{\sum \limits _{i=1}^{N} \nabla ^{2} F_{i}\left (\omega ^{0}, \zeta _{i}^{0}\right )-\sum \limits _{i=1}^{N} \nabla ^{2} f_{i}\left (\omega ^{0}\right )}{N}
\right \|_{F}^{2}
\\
&\leq 2\mathbb{E}\left \|\nabla ^{2} \hat{f}\left (\omega ^{0}\right )-
\nabla ^{2} \hat{f}\left (\omega ^{*}\right )\right \|_{F}^{2}+
\frac{2}{N^{2}} \cdot N \sum \limits _{i=1}^{N} \mathbb{E}\left \|
\nabla ^{2} F_{i}\left (\omega ^{0}, \zeta _{i}^{0}\right )-\nabla ^{2}
f_{i}\left (\omega ^{0}\right )\right \|_{F}^{2}
\\
&\overset{\text{Assumption~\ref{A-variance}}(ii)}{\leq} 2\mathbb{E}
\left \|\nabla ^{2} \hat{f}\left (\omega ^{0}\right )-\nabla ^{2}
\hat{f}\left (\omega ^{*}\right )\right \|_{F}^{2}+2 \kappa ^{2}.\qedhere
\end{align*}
\end{proof}

\begin{proof}[Proof of Lemma~\ref{le-stochastic_gradient_error_VTEX1}]
{\small

\begin{align*}
&~~~~~ \mathbb{E}\left \|\nabla ^{2} \hat{f}\left (\omega ^{*}\right )
\left (\omega ^{t}-\omega ^{*}\right )-\nabla F^{t}+\nabla \hat{f}
\left (\omega ^{*}\right )\right \|^{2}
\\
&=\mathbb{E}\left \|\nabla ^{2} \hat{f}\left (\omega ^{*}\right )
\left (\omega ^{t}-\omega ^{*}\right )-\nabla f^{t}+\nabla \hat{f}
\left (\omega ^{*}\right )+\nabla f^{t}-\nabla F^{t}\right \|^{2}
\\
&\leq 2\mathbb{E}\left \|\nabla ^{2} \hat{f}\left (\omega ^{*}\right )
\left (\omega ^{t}-\omega ^{*}\right )-\nabla f^{t}+\nabla \hat{f}
\left (\omega ^{*}\right )\right \|^{2}+2 \mathbb{E}\|{\nabla f^{t}-
\nabla F^{t} \|^{2}}
\\
&\leq 2 \mathbb{E}\left \|\nabla ^{2}\hat{f}\left (\omega ^{*}\right )
\left (\omega ^{t}-\omega ^{*}\right )-\nabla f^{t}+\nabla \hat{f}
\left (\omega ^{*}\right )\right \|^{2}+2\sum \limits _{q\in
\mathcal{A}^{t}}\mathbb{E}\left \|
\frac{1}{\left |\mathcal{C}^{t,q}\right |}\sum \limits _{i \in
\mathcal{C}^{t,q} } (\nabla f_{i}^{t,q}-\nabla F_{i}^{t,q})\right \|^{2}
+2\sum \limits _{q\in \mathcal{A}-\mathcal{A}^{t}}\mathbb{E}
\left \|\frac{1}{N}\sum \limits _{i=1}^{N} ( \theta _{i}^{t,q}-
\Theta _{i}^{t,q})\right \|^{2}
\\
&\leq 2\mathbb{E}\left \|\nabla ^{2} \hat{f}\left (\omega ^{*}\right )
\left (\omega ^{t}-\omega ^{*}\right )-\nabla f^{t}+\nabla \hat{f}
\left (\omega ^{*}\right )\right \|^{2}+2 \sum \limits _{q\in
\mathcal{A}^{t}} \frac{1}{\left |\mathcal{C}^{t,q}\right |} \sum
\limits _{i \in \mathcal{C}^{t,q} } \mathbb{E}\left \|\nabla f_{i}^{t,q}-
\nabla F_{i}^{t,q}\right \|^{2}
+2\sum \limits _{q\in \mathcal{A}-\mathcal{A}^{t}}\frac{1}{N}
\sum \limits _{i =1}^{N}\mathbb{E}\left \| \theta _{i}^{t,q}- \Theta _{i}^{t,q}
\right \|^{2}
\\
&\leq 2\mathbb{E}\left \|\nabla ^{2} \hat{f}\left (\omega ^{*}\right )
\left (\omega ^{t}-\omega ^{*}\right )-\nabla f^{t}+\nabla \hat{f}
\left (\omega ^{*}\right )\right \|^{2}+ \frac{2}{\psi ^{*}} \sum
\limits _{i=1}^{N} \sum \limits _{q\in \mathcal{A}^{t}} \mathbb{E}
\left \|\nabla f_{i}^{t,q}-\nabla F_{i}^{t,q}\right \|^{2}
+2\sum \limits _{q\in \mathcal{A}-\mathcal{A}^{t}}\frac{1}{N}
\sum \limits _{i =1}^{N}\mathbb{E}\left \| \theta _{i}^{t,q}- \Theta _{i}^{t,q}
\right \|^{2}
\\
&\leq 2\mathbb{E}\left \|\nabla ^{2} \hat{f}\left (\omega ^{*}\right )
\left (\omega ^{t}-\omega ^{*}\right )-\nabla f^{t}+\nabla \hat{f}
\left (\omega ^{*}\right )\right \|^{2}+ \frac{2}{\psi ^{*}}\sum
\limits _{i=1}^{N} \mathbb{E}\left \|\nabla f_{i}^{t}-\nabla F_{i}^{t}
\right \|^{2}
+2\frac{1}{N}
\sum \limits _{i =1}^{N}\mathbb{E}\left \|\nabla f_{i}^{t-\gamma^{t}_i}-\nabla F_{i}^{t-
\gamma^{t}_i}\right \|^{2}
\\
&\leq 2 \mathbb{E}\left \|\nabla ^{2} \hat{f}\left (\omega ^{*}
\right )\left (\omega ^{t}-\omega ^{*}\right )-\nabla f^{t}+\nabla
\hat{f}\left (\omega ^{*}\right )\right \|^{2}
+\frac{2}{\psi ^{*}}\sum \limits _{i=1}^{N}\mathbb{E}\left \|
\nabla f_{i}\left (\omega _{i}^{t}\right ) \odot \tau _{i}^{t}-
\nabla F_{i}\left (\omega _{i}^{t}\right ) \odot \tau _{i}^{t}\right
\|^{2}
\\
&~~~~+2\frac{1}{N}
\sum \limits _{i =1}^{N}\mathbb{E}\left \|\nabla f_{i}\left (\omega _{i}^{t-\gamma^{t}_i}
\right ) \odot \tau _{i}^{t-\gamma^{t}_i}-\nabla F_{i}\left (\omega _{i}^{t-
\gamma^{t}_i}\right ) \odot \tau _{i}^{t-\gamma^{t}_i}\right \|^{2}
\\
&\leq 2 \mathbb{E}\left \|\nabla ^{2} \hat{f}\left (\omega ^{*}
\right )\left (\omega ^{t}-\omega ^{*}\right )-\nabla f^{t}+\nabla
\hat{f}\left (\omega ^{*}\right )\right \|^{2}
+\frac{2}{\psi ^{*}} \sum \limits _{i=1}^{N}\mathbb{E}\left \|
\left (\nabla f_{i}\left (\omega _{i}^{t}\right )-\nabla F_{i}\left (
\omega _{i}^{t}\right )\right ) \odot \tau _{i}^{t}\right \|^{2}
\\
&~~~~+2\frac{1}{N}
\sum \limits _{i =1}^{N}\mathbb{E}\left \|\left (\nabla f_{i}\left (\omega _{i}^{t-
\gamma^{t}_i}\right ) -\nabla F_{i}\left (\omega _{i}^{t-\gamma^{t}_i}
\right )\right ) \odot \tau _{i}^{t-\gamma^{t}_i}\right \|^{2}
\\
&\overset{\text{Assumption~\ref{A-variance}}(i)}{\leq} 2\mathbb{E}
\left \|\nabla ^{2} \hat{f}\left (\omega ^{*}\right )\left (\omega ^{t}-
\omega ^{*}\right )-\nabla f^{t}+\nabla \hat{f}\left (\omega ^{*}
\right )\right \|^{2}+\frac{2N}{\psi ^{*}} \Lambda ^{2}+2 \Lambda ^{2}.\qedhere
\end{align*}}
\end{proof}

\begin{proof}[Proof ofLemma~\ref{le-pruning_error_VTEX1}]
\begin{align*}
&~~~~~ \mathbb{E}\left \|\nabla f^{t}-\nabla \hat{f}\left (\omega ^{*}
\right )-\nabla ^{2} \hat{f}\left (\omega ^{*}\right )\left (\omega ^{t}-
\omega ^{*}\right )\right \|^{2}
\\
& =\mathbb{E}\left \|\nabla f^{t}-\nabla \hat{f}\left (\omega ^{t}
\right )+\nabla \hat{f}\left (\omega ^{t}\right )-\nabla \hat{f}
\left (\omega ^{*}\right )-\nabla ^{2} \hat{f}\left (\omega ^{*}
\right )\left (\omega ^{t}-\omega ^{*}\right )\right \|^{2}
\\
& =\mathbb{E}\left \|
\underbrace{\left (\nabla f^{t}-\nabla \hat{f}\left (\omega ^{t}\right )\right )}_{A_{1}}+
\underbrace{\nabla \hat{f}\left (\omega ^{t}\right )-\nabla \hat{f}\left (\omega ^{*}\right )-\nabla ^{2} \hat{f}\left (\omega ^{*}\right )\left (\omega ^{t}-\omega ^{*}\right )}_{A_{2}}
\right \|^{2}
\\
& =2\mathbb{E}\left \|A_{1}\right \|^{2}+2\mathbb{E}\left \|A_{2}
\right \|^{2}.
\end{align*}

Next we bound $\mathbb{E}\left \|A_{1}\right \|^{2}$ and
$\mathbb{E}\left \|A_{2}\right \|^{2}$ respectively.

\textit{\textbf{$1$: Bounding $\mathbb{E}\left \|A_{1}\right \|^{2}$}}
%
%
\begin{align}
\label{eq-bound_a1_VTEX1}
&~~~~~\mathbb{E}\left \|A_{1}\right \|^{2}
\nonumber
\\
&=\mathbb{E}\left \|\nabla f^{t}-\nabla \hat{f}\left (\omega ^{t}
\right )\right \|^{2}
\nonumber
\\
&=\sum \limits _{q\in \mathcal{A}^{t}}\mathbb{E}\left \|
\frac{1}{\left |\mathcal{C}^{t,q}\right |}\sum \limits _{i \in
\mathcal{C}^{t,q} } (\nabla f_{i}^{t,q}-\nabla f_{i}^{q}(\omega ^{t}))
\right \|^{2} +\sum \limits _{q\in \mathcal{A}-\mathcal{A}^{t}}
\mathbb{E}\left \|\frac{1}{N}\sum \limits _{i =1}^{N} (\nabla \theta _{i}^{t,q}-
\nabla f_{i}^{q}(\omega ^{t}))\right \|^{2}
\nonumber
\\
&=\sum \limits _{q\in \mathcal{A}^{t}}\mathbb{E}\left \|
\frac{1}{\left |\mathcal{C}^{t,q}\right |}\sum \limits _{i \in
\mathcal{C}^{t,q} } (\nabla f_{i}^{q}(\omega _{i}^{t})-\nabla f_{i}^{q}(
\omega ^{t}))\right \|^{2} +\sum \limits _{q\in \mathcal{A}-
\mathcal{A}^{t}}\mathbb{E}\left \|\frac{1}{N}\sum \limits _{i =1}^{N} (
\nabla \theta _{i}^{t,q}-\nabla f_{i}^{q}(\omega ^{t}))\right \|^{2}
\nonumber
\\
&\leq\sum \limits _{q\in \mathcal{A}^{t}}
\frac{1}{\left |\mathcal{C}^{t,q}\right |} \sum \limits _{i \in
\mathcal{C}^{t,q} } \mathbb{E}\left \|\nabla f_{i}^{q}(\omega _{i}^{t})-
\nabla f_{i}^{q}(\omega ^{t})\right \|^{2} +\sum \limits _{q\in
\mathcal{A}-\mathcal{A}^{t}}\mathbb{E}\left \|\frac{1}{N}\sum
\limits _{i =1}^{N} (\nabla \theta _{i}^{t,q}-\nabla f_{i}^{q}(
\omega ^{t}))\right \|^{2}
\nonumber
\\
&\leq
\underbrace{\frac{1}{\psi ^{*}} \sum \limits _{i=1}^{N} \sum \limits _{q\in \mathcal{A}^{t}} \mathbb{E}\left \|\nabla f_{i}^{q}(\omega _{i}^{t})-\nabla f_{i}^{q}(\omega ^{t})\right \|^{2}}_{B_{1}}
+
\underbrace{\sum \limits _{q\in \mathcal{A}-\mathcal{A}^{t}}\mathbb{E}\left \|\frac{1}{N}\sum \limits _{i =1}^{N} (\nabla \theta _{i}^{t,q}-\nabla f_{i}^{q}(\omega ^{t}))\right \|^{2}}_{B_{2}}.
\end{align}

\textit{\textbf{$1.1$: Bounding $B_{1}$}}
\begin{equation*}
\begin{aligned}
B_{1}&=\frac{1}{\psi ^{*}} \sum \limits _{i=1}^{N} \sum \limits _{q
\in \mathcal{A}^{t}} \mathbb{E}\left \|\nabla f_{i}^{q}(\omega _{i}^{t})-
\nabla f_{i}^{q}(\omega ^{t})\right \|^{2}
\leq \frac{1}{\psi ^{*}} \sum \limits _{i=1}^{N} \mathbb{E}\left \|
\nabla f_{i}(\omega _{i}^{t})-\nabla f_{i}(\omega ^{t})\right \|^{2}.
\end{aligned}
\end{equation*}

According to Assumption~\ref{A-Lipschitz}, function $f_{i}(\omega )$ has
a $\iota _{g}$-Lipschitz gradient for $\forall i\in [N]$. Then it is obtained
that
%
%
\begin{equation}
\label{eq-Lg}
\left \|\nabla f_{i}\left (\omega _{i}^{t}\right )-\nabla f_{i}\left (
\omega ^{t}\right )\right \|\leq \iota _{g}\left \| \omega _{i}^{t}-
\omega ^{t}\right \|.
\end{equation}

According to Assumption~\ref{A-prun}, it holds that
$\Vert \omega ^{t}-\omega _{i}^{t} \Vert ^{2} \leq \delta ^{2}$ for
$\forall i\in [N]$.

Combining \myref{eq-Lg}, we then get that:
%
%
\begin{equation}
\label{eq-both}
\left \|\nabla f_{i}\left (\omega _{i}^{t}\right )-\nabla f_{i}\left (
\omega ^{t}\right )\right \|^{2}\leq \iota _{g}^{2}\left \| \omega _{i}^{t}-
\omega ^{t}\right \|^{2}\leq \iota _{g}^{2}\delta ^{2}.
\end{equation}

Thus $B_{1}$ is bounded as follows.
%
%
\begin{equation}
\label{bound-B_1}
B_{1}\leq \frac{1}{\psi ^{*}} \sum \limits _{i=1}^{N} \mathbb{E}
\left \|\nabla f_{i}(\omega _{i}^{t})-\nabla f_{i}(\omega ^{t})
\right \|^{2}\leq \frac{N}{\psi ^{*}} \iota _{g}^{2}\delta ^{2}.
\end{equation}

\textit{\textbf{$1.2$: Bounding $B_{2}$}}
%
%
\begin{align}
\label{eq-bound_b2_VTEX1}
B_{2}
\nonumber
&=\sum \limits _{q\in \mathcal{A}-\mathcal{A}^{t}}\mathbb{E}\left \|
\frac{1}{N}\sum \limits _{i =1}^{N} (\nabla \theta _{i}^{t,q}-\nabla f_{i}^{q}(
\omega ^{t}))\right \|^{2}
\nonumber
\\
&\leq \sum \limits _{q\in \mathcal{A}-\mathcal{A}^{t}}\mathbb{E}
\left \|\frac{1}{N}\sum \limits _{i =1}^{N} (\nabla f_{i}^{q}(\omega _{i}^{t-
\gamma^{t}_i})-\nabla f_{i}^{q}(\omega ^{t}))\right \|^{2}
\nonumber
\\
&= \sum \limits _{q\in \mathcal{A}-\mathcal{A}^{t}}\mathbb{E}
\left \|\frac{1}{N}\sum \limits _{i =1}^{N} (\nabla f_{i}^{q}(\omega _{i}^{t-
\gamma^{t}_i})-\nabla f_{i}^{q}(\omega ^{t-\gamma^{t}_i})+\nabla f_{i}^{q}(
\omega ^{t-\gamma^{t}_i})-\nabla f_{i}^{q}(\omega ^{t}))\right \|^{2}
\nonumber
\\
&\leq \frac{1}{N}\sum \limits _{i =1}^{N} \sum \limits _{q\in
\mathcal{A}-\mathcal{A}^{t}}\mathbb{E}\left \|\nabla f_{i}^{q}(
\omega _{i}^{t-\gamma^{t}_i})-\nabla f_{i}^{q}(\omega ^{t-\gamma^{t}_i})+
\nabla f_{i}^{q}(\omega ^{t-\gamma^{t}_i})-\nabla f_{i}^{q}(\omega ^{t})
\right \|^{2}
\nonumber
\\
&\leq \frac{2}{N}\sum \limits _{i =1}^{N} \sum \limits _{q\in
\mathcal{A}-\mathcal{A}^{t}}\mathbb{E}\left \|\nabla f_{i}^{q}(
\omega _{i}^{t-\gamma ^{t}_i})-\nabla f_{i}^{q}(\omega ^{t-\gamma ^{t}_i})
\right \|^{2}
\nonumber
+\frac{2}{N}\sum \limits _{i =1}^{N} \sum \limits _{q\in
\mathcal{A}-\mathcal{A}^{t}}\mathbb{E}\left \|\nabla f_{i}^{q}(
\omega ^{t-\gamma ^{t}_i})-\nabla f_{i}^{q}(\omega ^{t})\right \|^{2}
\nonumber
\\
&\leq
\underbrace{\frac{2}{N}\sum \limits _{i =1}^{N} \mathbb{E}\left \|\nabla f_{i}(\omega _{i}^{t-\gamma ^{t}_i})-\nabla f_{i}(\omega ^{t-\gamma ^{t}_i})\right \|^{2}}_{C_{1}}+\underbrace{ \frac{2}{N}\sum \limits _{i =1}^{N} \mathbb{E}\left \|\nabla f_{i}(
\omega ^{t-\gamma ^{t}_i})-\nabla f_{i}(\omega ^{t})\right \|^{2}}_{C_{2}}.
\end{align}

\textit{\textbf{$1.2.1$: Bounding $C_{1}$}}

By \myref{eq-both}, $C_{1}$ is bounded as follows.
%
%
\begin{equation}
\label{bound-C_1}
C_{1}\leq \frac{2}{N}\sum \limits _{i =1}^{N} \mathbb{E}\left \|
\nabla f_{i}(\omega _{i}^{t-\gamma ^{t}})-\nabla f_{i}(\omega ^{t-
\gamma ^{t}})\right \|^{2}\leq 2 \iota _{g}^{2}\delta ^{2}.
\end{equation}

\textit{\textbf{$1.2.2$: Bounding $C_{2}$}}
%
%
\begin{equation}
\label{eq-c}
\begin{aligned}
C_{2}&=\frac{2}{N}\sum \limits _{i =1}^{N} \mathbb{E}\left \|\nabla f_{i}(
\omega ^{t-\gamma ^{t}_i})-\nabla f_{i}(\omega ^{t})\right \|^{2}
\\
&\overset{\myref{eq-Lg}}{\leq} \frac{2\iota _{g}^{2}}{N}\sum
\limits _{i =1}^{N} \mathbb{E}\left \|\omega ^{t-\gamma ^{t}_i}-\omega ^{t}
\right \|^{2}
\\
&\leq 2\iota _{g}^{2} \mathbb{E}\left \|\omega ^{t-\gamma ^{t}}-\omega ^{t}
\right \|^{2}
\\
&\leq 2\iota _{g}^{2}\gamma ^{t}
\sum \limits _{l=0}^{\gamma ^{t}-1}\mathbb{E}\left \|\omega ^{t-l}-
\omega ^{t-(l+1)}\right \|^{2}.
\end{aligned}
%
\end{equation}

Next, we bound
$\mathbb{E}\left \|\omega ^{t-l}-\omega ^{t-(l+1)}\right \|^{2}$. Recalling the lobal parameter update rule
$\omega ^{t+1} = \omega ^{t} - [\mathbf{\Pi}]_{\mu}^{-1} \nabla F^{t}$, we have
%
%
\begin{equation}
\label{eq-ctotal2}
\begin{aligned}
&~~~~~\mathbb{E}\left \|\omega ^{t-l}-\omega ^{t-(l+1)}\right \|^{2}
\\
&=\mathbb{E}\left \|[\mathbf{\Pi}]_{\mu}^{-1} \nabla F^{t-(l+1)}
\right \|^{2}
\\
&\overset{\text{Assumption~\ref{A-strongly_convex_VTEX1}}}{\leq}\frac{1}{\mu ^{2}}
\mathbb{E}\left \|\nabla F^{t-(l+1)}\right \|^{2}
\\
&\leq \frac{1}{\mu ^{2}} \left (\sum \limits _{q\in \mathcal{A}^{t-(l+1)}}
\mathbb{E}\left \|\nabla F^{{t-(l+1)},q}\right \|^{2} + \sum \limits _{q
\in \mathcal{A}-\mathcal{A}^{t-(l+1)}}\mathbb{E}\left \|\nabla F^{{t-(l+1)},q}
\right \|^{2}\right ).
\end{aligned}
%
\end{equation}

$\sum \limits _{q\in \mathcal{A}^{t-(l+1)}}\mathbb{E}\left \|\nabla F^{{t-(l+1)},q}
\right \|^{2}$ can be further bounded as follows.
\begin{equation*}
\begin{aligned}
&~~~~~\sum \limits _{q\in \mathcal{A}^{t-(l+1)}}\mathbb{E}\left \|
\nabla F^{{t-(l+1)},q}\right \|^{2}
\\
&\leq \sum \limits _{q\in \mathcal{A}^{t-(l+1)}}\mathbb{E}\left \|
\frac{1}{\left |\mathcal{C}^{{t-(l+1)},q}\right |}\sum \limits _{i
\in \mathcal{C}^{{t-(l+1)},q} }\nabla F_{i}^{{t-(l+1)},q}\right \|^{2}
\\
&\leq \frac{1}{\left |\mathcal{C}^{{t-(l+1)},q}\right |}\sum \limits _{i
\in \mathcal{C}^{{t-(l+1)},q} }\mathbb{E}\left \|\nabla F_{i}^{t-(l+1)}
\right \|^{2}
\\
&\overset{\text{Assumption~\ref{A-variance}}}{\leq}\frac{1}{\left |\mathcal{C}^{{t-(l+1)},q}\right |}\sum \limits _{i
\in \mathcal{C}^{{t-(l+1)},q} }\left(\mathbb{E}\left \|\nabla f_{i}^{t-(l+1)}
\right \|^{2}+\Lambda ^{2}\right)
\\
&\leq \frac{1}{\left |\mathcal{C}^{{t-(l+1)},q}\right |}\sum \limits _{i
\in \mathcal{C}^{{t-(l+1)},q} }\left(\mathbb{E}\left \|\nabla f_{i}(\omega _{i}^{t-(l+1)})
\right \|^{2}+\Lambda ^{2}\right).
\end{aligned}
\end{equation*}

According to Assumption~\ref{A-Lipschitz}, function $f_{i}(\omega )$ is
$\iota $-Lipschitz for $\forall i\in [N]$. Then it is obtained that:
%
%
\begin{equation}
\label{eq-l_lip_VTEX1}
\left \|\nabla f_{i}\left (\omega _{i}^{t}\right )\right \| \leq L.
\end{equation}

And we have:
%
%
\begin{equation}
\label{eq-c21}
\sum \limits _{q\in \mathcal{A}^{t-(l+1)}}\mathbb{E}\left \|\nabla F^{{t-(l+1)},q}
\right \|^{2}\overset{\text{Assumption~\ref{A-variance}}(i)}{\leq}L^{2}+\Lambda ^{2}.
\end{equation}

We then further bound
$ \sum \limits _{q\in \mathcal{A}-\mathcal{A}^{t-(l+1)}}\mathbb{E}
\left \|\nabla F^{{t-(l+1)},q}\right \|^{2}$.
\begin{equation*}
\begin{aligned}
&~~~~~ \sum \limits _{q\in \mathcal{A}-\mathcal{A}^{t-(l+1)}}
\mathbb{E}\left \|\nabla F^{{t-(l+1)},q}\right \|^{2}
\leq \frac{1}{N}\sum \limits _{i =1 }^{N}\mathbb{E}\left \|\nabla
\Theta _{i}^{t-(l+1)}\right \|^{2}
\leq \frac{1}{N}\sum \limits _{i =1 }^{N}\mathbb{E}\left \|\nabla F_{i}^{t-(l+1)-
\gamma_i^{t-(l+1)}}\right \|^{2}\\
&
\overset{\text{Assumption~\ref{A-variance}}(i)}{\leq} \frac{1}{N}\sum \limits _{i =1 }^{N}\left(\mathbb{E}\left \|\nabla f_{i}^{t-(l+1)-
\gamma_i^{t-(l+1)}}\right \|^{2}+\Lambda ^{2}\right)
\leq \frac{1}{N}\sum \limits _{i =1 }^{N}\left(\mathbb{E}\left \|\nabla f_{i}(
\omega _{i}^{t-(l+1)-\gamma_i^{t-(l+1)}})\right \|^{2}+\Lambda ^{2}\right).
\end{aligned}
\end{equation*}

According to \myref{eq-l_lip_VTEX1}, we have:
%
%
\begin{equation}
\label{eq-c22}
\sum \limits _{q\in \mathcal{A}-\mathcal{A}^{t-(l+1)}}\mathbb{E}
\left \|\nabla F^{{t-(l+1)},q}\right \|^{2}\overset{\text{Assumption~\ref{A-variance}}(i)}{\leq} L^{2}+\Lambda ^{2}.
\end{equation}

Plugging \myref{eq-c21} and \myref{eq-c22} into \myref{eq-ctotal2},
we have
\begin{equation*}
\begin{aligned}
&~~~~~\mathbb{E}\left \|\omega ^{t-l}-\omega ^{t-(l+1)}\right \|^{2}
\leq \frac{1}{\mu ^{2}} \left (\sum \limits _{q\in \mathcal{A}^{t-(l+1)}}
\mathbb{E}\left \|\nabla F^{{t-(l+1)},q}\right \|^{2} + \sum \limits _{q
\in \mathcal{A}-\mathcal{A}^{t-(l+1)}}\mathbb{E}\left \|\nabla F^{{t-(l+1)},q}
\right \|^{2}\right )
\leq \frac{2(L^{2}+\Lambda ^{2})}{\mu ^{2}}.
\end{aligned}
\end{equation*}

Taking above result back to \myref{eq-c}, we obtain that
%
%

\begin{equation}
\label{eq-c2final}
\begin{aligned}
C_{2}&\leq \frac{2\iota _{g}^{2}\gamma ^{t}}{N}\sum \limits _{i =1}^{N}
\sum \limits _{l=0}^{\gamma ^{t}-1}\mathbb{E}\left \|\omega ^{t-l}-
\omega ^{t-(l+1)}\right \|^{2}
\leq \frac{4(L^{2}+\Lambda ^{2})\iota _{g}^{2}(\gamma ^{t})^{2}}{\mu ^{2}}.
\end{aligned}
%
\end{equation}

Plugging \myref{bound-C_1} and \myref{eq-c2final} into \myref{eq-bound_b2_VTEX1},
we have
%
%
{\footnotesize
\begin{equation}
\label{bound_b2_VTEX1}
\begin{aligned}
B_{2}&=\sum \limits _{q\in \mathcal{A}-\mathcal{A}^{t}}\mathbb{E}
\left \|\frac{1}{N}\sum \limits _{i =1}^{N} (\nabla \theta _{i}^{t,q}-
\nabla f_{i}^{q}(\omega ^{t}))\right \|^{2}
\leq
\underbrace{\frac{2}{N}\sum \limits _{i =1}^{N} \mathbb{E}\left \|\nabla f_{i}(\omega _{i}^{t-\gamma ^{t}})-\nabla f_{i}(\omega ^{t-\gamma ^{t}})\right \|^{2}}_{C_{1}}
+
\underbrace{ \frac{2}{N}\sum \limits _{i =1}^{N} \sum \limits _{q\in \mathcal{A}-\mathcal{A}^{t}}\mathbb{E}\left \|\nabla f_{i}^{q}(\omega ^{t-\gamma ^{t}})-\nabla f_{i}^{q}(\omega ^{t})\right \|^{2}}_{C_{2}}
\\
&\leq 2 \iota _{g}^{2}\delta ^{2}+
\frac{4(L^{2}+\Lambda ^{2})\iota _{g}^{2}(\gamma ^{t})^{2}}{\mu ^{2}}.
\end{aligned}
%
\end{equation}}

By \myref{bound-B_1}, \myref{bound_b2_VTEX1} and \ref{eq-bound_a1_VTEX1},
we get:
%
%
\begin{equation}
\label{eq-bound_a1_final_VTEX1}
\begin{aligned}
&~~~~~\mathbb{E}\left \|A_{1}\right \|^{2}
\\
&\leq
\underbrace{\frac{1}{\psi ^{*}} \sum \limits _{i=1}^{N} \sum \limits _{q\in \mathcal{A}^{t}} \mathbb{E}\left \|\nabla f_{i}^{q}(\omega _{i}^{t})-\nabla f_{i}^{q}(\omega ^{t})\right \|^{2}}_{B_{1}}
+
\underbrace{\sum \limits _{q\in \mathcal{A}-\mathcal{A}^{t}}\mathbb{E}\left \|\frac{1}{N}\sum \limits _{i =1}^{N} (\nabla \theta _{i}^{t,q}-\nabla f_{i}^{q}(\omega ^{t}))\right \|^{2}}_{B_{2}}
\\
&\leq \frac{N}{\psi ^{*}} \iota _{g}^{2}\delta ^{2}+ 2 \iota _{g}^{2}
\delta ^{2}+\frac{4(L^{2}+\Lambda ^{2})\iota _{g}^{2}(\gamma ^{t})^{2}}{\mu ^{2}}.
\end{aligned}
%
\end{equation}

\textit{\textbf{$2$: Bounding $\mathbb{E}\left \|A_{2}\right \|^{2}$}}

Recalling Assumption~\ref{A-Lipschitz}, which assumes that each function $f_{i}(\omega )$ has an $\iota _{h}$-Lipschitz Hessian for $\forall i\in [N]$, we can apply the triangle inequality to obtain:
\[
\|\nabla^2 \hat{f}(\omega ^{t}) - \nabla^2 \hat{f}(\omega ^{*})\| \leq \frac{1}{N} \sum_{i=1}^N \|\nabla^2 f_i(\omega ^{t}) - \nabla^2 f_i(\omega ^{*})\|\leq \frac{1}{N} \sum_{i=1}^N \iota _{h} \|\omega ^{t} - \omega ^{*}\| =\iota _{h} \|\omega ^{t} - \omega ^{*}\|.
\]

This shows that $\hat{f}(\cdot)$ also has an $\iota_h$-Lipschitz Hessian. Furthermore, by Lemma 1 in~\cite{4DBLP:journals/mp/NesterovP06}, the following holds:
%
%
\begin{equation}
\label{eq-global_Hessian_Lip_VTEX1}
\left \|\nabla
\hat{f}\left (\omega ^{t}\right )-\nabla \hat{f}\left (\omega ^{*}
\right )-\nabla ^{2} \hat{f}\left (\omega ^{*}\right )\left (\omega ^{t}-
\omega ^{*}\right )\right \| \leq \frac{\iota _{h}}{2}
\lVert \omega ^{t}-\omega ^{*}\rVert ^{2}.
\end{equation}

Using this, we can derive the upper bound for $\mathbb{E}\lVert A_{2} \rVert ^{2}$:
%
%
\begin{equation}
\label{bound-A_2}
\begin{aligned}
\mathbb{E}\left \|A_{2}\right \|^{2}& = \mathbb{E}\left \|\nabla
\hat{f}\left (\omega ^{t}\right )-\nabla \hat{f}\left (\omega ^{*}
\right )-\nabla ^{2} \hat{f}\left (\omega ^{*}\right )\left (\omega ^{t}-
\omega ^{*}\right )\right \|^{2}
\leq \frac{\iota _{h}^{2}}{4}\mathbb{E}\left \|\omega ^{t}-
\omega ^{*}\right \|^{4}.
\end{aligned}
%
\end{equation}

Finally, by combining \myref{eq-bound_a1_final_VTEX1} and \myref{bound-A_2}, we
obtain the upper bound of
$\mathbb{E}\left \|\nabla f^{t}-\nabla \hat{f}\left (\omega ^{*}
\right )-\nabla ^{2} \hat{f}\left (\omega ^{*}\right )\left (\omega ^{t}-
\omega ^{*}\right )\right \|^{2}$.
\begin{equation*}
\begin{aligned}
&~~~~~\mathbb{E}\left \|\nabla f^{t}-\nabla \hat{f}\left (\omega ^{*}
\right )-\nabla ^{2} \hat{f}\left (\omega ^{*}\right )\left (\omega ^{t}-
\omega ^{*}\right )\right \|^{2}
\\
& =2\mathbb{E}\left \|A_{1}\right \|^{2}+ 2\mathbb{E}\left \|A_{2}
\right \|^{2}
\\
&\leq 2\left (\frac{N}{\psi ^{*}} \iota _{g}^{2}\delta ^{2}+ 2 \iota _{g}^{2}
\delta ^{2}+\frac{4(L^{2}+\Lambda ^{2})\iota _{g}^{2}(\gamma ^{t})^{2}}{\mu ^{2}}+
\frac{\iota _{h}^{2}}{4}\mathbb{E}\left \|\omega ^{t}-
\omega ^{*}\right \|^{4}\right )
\\
&\leq \frac{N}{\psi ^{*}}\left (2\iota _{g}^{2}\delta ^{2}\right )+
(\gamma ^{t})^{2} \frac{8(L^{2}+\Lambda ^{2})\iota _{g}^{2}}{\mu ^{2}}+4\iota _{g}^{2}
\delta ^{2}+ \frac{\iota _{h}^{2}}{2}\mathbb{E}\left \|
\omega ^{t}-\omega ^{*}\right \|^{4}.
\end{aligned}
\end{equation*}
\end{proof}

\begin{proof}[Proof of Theorem~\ref{theorem-rate}]
%
%
\begin{equation}
\label{eq-theo0}
\begin{aligned}
&~~~~ \mathbb{E}\left \|\omega ^{t+1}-\omega ^{*}\right \|^{2}
\\
& =\mathbb{E}\left \|\omega ^{t}-\omega ^{*}-[\mathbf{\Pi}]_{\mu}^{-1}
\nabla F^{t}\right \|^{2}
\\
& \leq \mathbb{E}\left \|[\mathbf{\Pi}]_{\mu}^{-1}\right \|^{2}\left
\|[\mathbf{\Pi}]_{\mu}\left (\omega ^{t}-\omega ^{*}\right )-\nabla F^{t}
\right \|^{2}
\\
&\overset{\text{Assumption~\ref{A-strongly_convex_VTEX1}}}{\leq}
\frac{1}{\mu ^{2}}\mathbb{E}\left \|\left ([\mathbf{\Pi}]_{\mu}-
\nabla ^{2} \hat{f}\left (\omega ^{*}\right ) \right )\left ( \omega ^{t}-
\omega ^{*}\right )+\nabla ^{2} \hat{f}\left (\omega ^{*}\right )
\left (\omega ^{t}-\omega ^{*}\right )-\nabla F^{t}+\nabla \hat{f}
\left (\omega ^{*}\right )\right \|^{2}
\\
& \leq \frac{2}{\mu ^{2}}\left (\mathbb{E}\left \|\left ([
\mathbf{\Pi}]_{\mu}-\nabla ^{2} \hat{f}\left (\omega ^{*}\right )
\right )\left (\omega ^{t}-\omega ^{*}\right )\right \|^{2}+
\mathbb{E}\left \|\nabla ^{2} \hat{f}\left (\omega ^{*}\right )\left (
\omega ^{t}-\omega ^{*}\right )-\nabla F^{t}+\nabla \hat{f}\left (
\omega ^{*}\right )\right \|^{2}\right )
\\
& \leq \frac{2}{\mu ^{2}}\left (\mathbb{E}\left \|\omega ^{t}-\omega ^{*}
\right \|^{2}\mathbb{E}\left \| [\mathbf{\Pi}]_{\mu}-\nabla ^{2}
\hat{f}\left (\omega ^{*}\right )\right \|^{2}+\mathbb{E}\left \|
\nabla ^{2} \hat{f}\left (\omega ^{*}\right )\left (\omega ^{t}-
\omega ^{*}\right )-\nabla F^{t}+\nabla \hat{f}\left (\omega ^{*}
\right )\right \|^{2}\right )
\\
&\overset{\text{Lemma~\ref{le-projection}}}{\leq} \frac{2}{\mu ^{2}}\left (
\mathbb{E}\left \|\omega ^{t}-\omega ^{*}\right \|^{2}\mathbb{E}
\left \| \mathbf{\Pi}-\nabla ^{2} \hat{f}\left (\omega ^{*}\right )
\right \|^{2}+\mathbb{E}\left \|\nabla ^{2} \hat{f}\left (\omega ^{*}
\right )\left (\omega ^{t}-\omega ^{*}\right )-\nabla F^{t}+\nabla
\hat{f}\left (\omega ^{*}\right )\right \|^{2}\right )
\\
&\overset{\text{Lemma~\ref{le-stochastic_hessian_error_VTEX1}}}{\leq}
\frac{2}{\mu ^{2}}\left [\mathbb{E}\left \|\omega ^{t}-\omega ^{*}
\right \|^{2}\left (2\mathbb{E}\left \|\nabla ^{2} \hat{f}\left (
\omega ^{0}\right )-\nabla ^{2} \hat{f}\left (\omega ^{*}\right )
\right \|_{F}^{2}+2\kappa ^{2}\right )\right .
\left .+\mathbb{E}\left \|\nabla ^{2} \hat{f}\left (
\omega ^{*}\right )\left (\omega ^{t}-\omega ^{*}\right )-\nabla F^{t}+
\nabla \hat{f}\left (\omega ^{*}\right )\right \|^{2}\right ]
\\
&\overset{\text{Lemma~\ref{le-stochastic_gradient_error_VTEX1}}}{\leq}
\frac{2}{\mu ^{2}}\left [\mathbb{E}\left \|\omega ^{t}-\omega ^{*}
\right \|^{2}\left (2\mathbb{E}\left \|\nabla ^{2} \hat{f}\left (
\omega ^{0}\right )-\nabla ^{2} \hat{f}\left (\omega ^{*}\right )
\right \|_{F}^{2}+2\kappa ^{2}\right )\right .
\\
&~~~~~~~~~~\left .+2\mathbb{E}\left \|\nabla ^{2} \hat{f}\left (
\omega ^{*}\right )\left (\omega ^{t}-\omega ^{*}\right )-\nabla f^{t}+
\nabla \hat{f}\left (\omega ^{*}\right )\right \|^{2}+
\frac{2N}{\psi ^{*}} \Lambda ^{2}+2 \Lambda ^{2}\right ]
\\
&\overset{\text{Lemma~\ref{le-pruning_error_VTEX1}}}{\leq} \frac{2}{\mu ^{2}}
\Bigg [\mathbb{E}\left \|\omega ^{t}-\omega ^{*}\right \|^{2}\left (2
\mathbb{E}\left \|\nabla ^{2} \hat{f}\left (\omega ^{0}\right )-
\nabla ^{2} \hat{f}\left (\omega ^{*}\right )\right \|_{F}^{2}+2
\kappa ^{2}\right )
\\
&~~~~~~~~~~+2 \left ( \frac{N}{\psi ^{*}}\left (2\iota _{g}^{2}
\delta ^{2}\right )+(\gamma ^{t})^{2}
\frac{8(L^{2}+\Lambda ^{2})\iota _{g}^{2}}{\mu ^{2}}+4\iota _{g}^{2}\delta ^{2}+
\frac{\iota _{h}^{2}}{2}\mathbb{E}\left \|\omega ^{t}-
\omega ^{*}\right \|^{4}\right )+\frac{2N}{\psi ^{*}} \Lambda ^{2}+2
\Lambda ^{2}\Bigg]
\\
& \leq \frac{2}{\mu ^{2}}\Bigg[\mathbb{E}\left \|\omega ^{t}-\omega ^{*}
\right \|^{2}\left (2\mathbb{E}\left \|\nabla ^{2} \hat{f}\left (
\omega ^{0}\right )-\nabla ^{2} \hat{f}\left (\omega ^{*}\right )
\right \|_{F}^{2}+2\kappa ^{2}\right )
\\
&~~~~~~~~~~+2\Bigg( \frac{N}{\psi ^{*}}\left (2\iota _{g}^{2}\delta ^{2}+
\Lambda ^{2}\right )+(\gamma ^{t})^{2}
\frac{8(L^{2}+\Lambda ^{2})\iota _{g}^{2}}{\mu ^{2}}+4\iota _{g}^{2}\delta ^{2}+
\Lambda ^{2}+ \frac{\iota _{h}^{2}}{2}\mathbb{E}\left \|
\omega ^{t}-\omega ^{*}\right \|^{4}\Bigg)\Bigg].
\end{aligned}
%
\end{equation}

Since $\frac{b-\sqrt{\rho}}{2a}\le\|\omega^0-\omega^*\|\le\min\left\{\frac{\mu}{4\iota_h}, \frac{b+\sqrt{\rho}}{2a}\right\}\le\frac{b+\sqrt{\rho}}{2a}$, it follows that $\frac{b-\sqrt{\rho}}{2a}\le \|\omega^0-\omega^*\|\le \frac{b+\sqrt{\rho}}{2a}$. This implies that
\begin{equation*}
a\|\omega^0-\omega^*\|^2-b\|\omega^0-\omega^*\|+c\le 0.
\end{equation*}

Since $\|\omega^0-\omega^*\|>0$, we can equivalently state that
\begin{equation*}
\frac{c}{\|\omega^0-\omega^*\|}\le b-a\|\omega^0-\omega^*\|.
\end{equation*}
This shows that there exists at least one valid $\alpha>0$ such that $\alpha\ge\frac{c}{\|\omega^0-\omega^*\|}$ and $\alpha\le b-a\|\omega^0-\omega^*\|$. Using the fact that $c\le\alpha\|\omega^0-\omega^*\|$ and \myref{eq-theo0}, we further obtain:

\begin{equation}
\label{eq-theo}
	\mathbb{E}\|\omega^{t+1}-\omega^*\|^2\leq \frac{4}{\mu ^{2}} \mathbb{E}\left \|\omega ^{t}-\omega ^{*}
\right \|^{2}\left (\mathbb{E}\left \|\nabla ^{2} \hat{f}\left (
\omega ^{0}\right )-\nabla ^{2} \hat{f}\left (\omega ^{*}\right )
\right \|_{F}^{2}+\kappa ^{2}+\alpha +
\frac{\iota _{h}^{2}}{2}\mathbb{E}\| \omega ^{t}-\omega ^{*}
\|^{2}\right )
\end{equation}

Given that
$\|\omega^0-\omega^*\|\le\frac{b-\alpha}{a}$, we next prove by induction that $\mathbb{E}\|\omega^t-\omega^*\|\le\frac{b-\alpha}{a}$ holds for all $t\ge 0$. Firstly, The claim holds trivially since for the base case of $t=0$ since $\|\omega^0-\omega^*\|\le\frac{b-\alpha}{a}$ by our assumption. Next, we assume $\mathbb{E}\|\omega^t-\omega^*\|\le\frac{b-\alpha}{a}$ holds for some $t\ge 0$. We show that $\mathbb{E}\|\omega^{t+1}-\omega^*\|\le\frac{b-\alpha}{a}$ also holds. Since $\|\omega^0-\omega^*\|\le\frac{\mu}{4\iota_h}$, we have $\left \|\nabla ^{2} \hat{f}\left (\omega ^{0}\right )-\nabla ^{2}
\hat{f}\left (\omega ^{*}\right )\right \|_{F}^{2}\leq
\frac{\mu ^{2}}{16}$. Substituting this into \myref{eq-theo}, we get:
\begin{equation}
\label{eq-converge}
\begin{aligned}
&~~~~~\mathbb{E}\|\omega ^{t+1}-\omega ^{*}\|^{2}
\\
& \leq \frac{4}{\mu ^{2}} \mathbb{E}\left \|\omega ^{t}-\omega ^{*}
\right \|^{2}\left (\mathbb{E}\left \|\nabla ^{2} \hat{f}\left (
\omega ^{0}\right )-\nabla ^{2} \hat{f}\left (\omega ^{*}\right )
\right \|_{F}^{2}+\kappa ^{2}+\alpha +
\frac{\iota _{h}^{2}}{2}\mathbb{E}\| \omega ^{t}-\omega ^{*}
\|^{2}\right )
\\
&\leq \frac{4}{\mu ^{2}} \mathbb{E}\left \|\omega ^{t}-\omega ^{*}
\right \|^{2}\left (\frac{\mu ^{2}}{16}+\kappa ^{2}+\alpha +b-\alpha
\right )
\\
&\leq \frac{1}{2} \mathbb{E}\left \|\omega ^{t}-\omega ^{*}\right \|^{2}\\
&\leq \frac{b-\alpha}{a}.
\end{aligned}
\end{equation}

Thus, by induction, $\mathbb{E}\|\omega ^{t}-\omega ^{*}\|^{2} \leq \frac{b-\alpha}{a}$ holds for all $t\ge 0$. Using this and following the same steps as in the first three inqualities of \myref{eq-converge}, we conclude that the convergence rate satisfies:
$\mathbb{E}\|\omega ^{t+1}-\omega ^{*}\|^{2}\leq \frac{1}{2}
\mathbb{E}\left \|\omega ^{t}-\omega ^{*}\right \|^{2}$.
\end{proof}

\end{document}